
\documentclass[twocolumn]{svjour3}   
\smartqed  
\usepackage{booktabs}
\usepackage[ruled]{algorithm2e}
\usepackage{mathptmx}
\usepackage{graphicx}
\usepackage{times}
\usepackage{epsfig}
\usepackage{epstopdf}
\usepackage{amsmath}
\usepackage{amssymb}
\usepackage{subfigure}
\usepackage{caption}
\usepackage{tabularx}
\usepackage[table]{xcolor}
\usepackage[normalem]{ulem}
\usepackage[noend]{algpseudocode}
\usepackage{booktabs}
\usepackage{multirow}
\usepackage{lipsum}
\usepackage{float}
\usepackage[cmintegrals]{newtxmath}
\usepackage{graphicx}
\usepackage{color}
\hyphenation{op-tical net-works semi-conduc-tor}
\usepackage[colorlinks]{hyperref}

\makeatletter
\DeclareRobustCommand\onedot{\futurelet\@let@token\@onedot}
\def\@onedot{\ifx\@let@token.\else.\null\fi\xspace}
 
\def\ie{\emph{i.e}\onedot}

\def\etal{\emph{et al}\onedot}

\makeatother

\newcolumntype{L}[1]{>{\raggedright\arraybackslash}p{#1}}
\newcolumntype{C}[1]{>{\centering\arraybackslash}p{#1}}
\newcolumntype{R}[1]{>{\raggedleft\arraybackslash}p{#1}}

\begin{document}

\title{A Causal Inspired Early-Branching Structure \\ for Domain Generalization}

\author{$\text{Liang~Chen}^{1}$ \and
        $\text{Yong~Zhang}^{2\star}$ \and
        $\text{Yibing~Song}^{3}$ \and
        $\text{Zhen~Zhang}^{1}$ \and
        $\text{Lingqiao~Liu}^{1\star}$ \\
        \email{\{liangchen527,~~zhangyong201303,~~yibingsong.cv\}@gmail.com},\\
        ~~zhen@zzhang.org,~~lingqiao.liu@adelaide.edu.au}

\authorrunning{Liang Chen et al.} 

\institute{$~^1$ The University of Adelaide, SA, Australia. \\
$~^2$ Tencent AI Lab, Shenzhen, China.\\
$~^3$ Alibaba DAMO Academy, Hangzhou, China. \\
$\star$ Corresponding authors. \\
$\dag$ This work is done when L. Chen interned at Tencent AI Lab. Code is available at \url{https://github.com/liangchen527/CausEB}.} 

\date{Received: date / Accepted: date}

\maketitle

\begin{abstract}
Learning domain-invariant semantic representations is crucial for achieving domain generalization (DG), where a model is required to perform well on unseen target domains. One critical challenge is that standard training often results in entangled semantic and domain-specific features. Previous works suggest formulating the problem from a causal perspective and solving the entanglement problem by enforcing marginal independence between the causal (\ie semantic) and non-causal (\ie domain-specific) features. Despite its simplicity, the basic marginal independent-based idea alone may be insufficient to identify the causal feature. By d-separation, we observe that the causal feature can be further characterized by being independent of the domain conditioned on the object, and we propose the following two strategies as complements for the basic framework.

First, the observation implicitly implies that for the same object, the causal feature should not be associated with the non-causal feature, revealing that the common practice of obtaining the two features with a shared base feature extractor and two lightweight prediction heads might be inappropriate. To meet the constraint, we propose a simple early-branching structure, where the causal and non-causal feature obtaining branches share the first few blocks while diverging thereafter, for better structure design;
Second, the observation implies that the causal feature remains invariant across different domains for the same object. To this end, we suggest that augmentation should be incorporated into the framework to better characterize the causal feature, and we further suggest an effective random domain sampling scheme to fulfill the task. 
Theoretical and experimental results show that the two strategies are beneficial for the basic marginal independent-based framework.

\keywords{domain generalization, causal feature, early-branching structure, random domain sampling.}
\end{abstract}

\maketitle

\newcommand{\red}[0]{\textcolor{red}}
\newcommand{\blue}[0]{\textcolor{blue}}
\newcommand{\gray}[0]{\textcolor{gray}}
\newcommand{\rot}[0]{\rotatebox{90}}
\newtheorem{prop}{Proposition}

\section{Introduction}
Deep learning achieves tremendous success in many fields with useful real-world applications \cite{silver2016mastering,jumper2021highly}. However, most current deep models are developed based on the assumption that data from training and testing environments follows the same distribution, which does not always hold in practice. This problem can result in poor performance for delicately designed models and hinders further deep learning applications. To this end, it is important to develop domain generalization (DG) methods that
can maintain good performance in arbitrary environments. 

The key of DG is to learn domain-invariant semantic representations that are robust to domain shift \cite{ben2006analysis}. However, standard training often results in entangled semantic and domain-specific features, which may change drastically in a new domain, hindering the model from generalizing to new environments. To address the issue, existing methods introduce various forms of regularization, such as adopting alignment \cite{muandet2013domain,ghifary2016scatter,li2018domain,hu2020domain,cha2022domain},
using domain-adversarial training \cite{ganin2016domain,li2018domain,yang2021adversarial,li2018deep}, or developing meta-learning methods \cite{li2018learning,balaji2018metareg,dou2019domain,li2019episodic}. Despite the success of these arts on certain occasions, DG remains challenging and is far from being solved. For example, as a recent study \cite{gulrajani2020search} suggests, under a rigorous evaluation protocol, it turns out that the naive empirical risk minimization (ERM) method \cite{vapnik1999nature}, which aggregates training data from all domains and trains them in an end-to-end manner without additional efforts, can perform competitively against most elaborate alternatives. This observation highlights the need for a more effective method that is capable of obtaining the domain-invariant semantic feature.

To derive a better objective to obtain the domain-invariant representation, following prior arts \cite{heinze2021conditional,atzmon2020causal,mahajan2021domain}, we model the image generation process with a common structural causal model (SCM). As depicted in Fig.~\ref{fig scm}, images are considered to be generated by interventions \footnote{In causal relation, denoted as ``cause $\rightarrow$ effect", intervention means changing the value of a variable without affecting its causes but affecting its effects (variables that depend on it) \cite{pearl2009causality,peters2017elements}.} on the ``Object (O)" and ``Domain (D)", which are the causes of the domain-invariant semantic (causal: $X_o$) and domain-specific (non-causal: $X_d$) features, and only the semantic feature causes the target output. We cast the DG problem of locating the intervention that causes the semantic label of given images.  
In the general case, the entanglement of causal and non-causal features will result in an undesired situation where the non-causal information is also encoded in the target classifier, inevitably corrupting the prediction when an unexpected intervention is imposed on D during inference.

To explicitly disentangle the two types of features, previous arts~\cite{ganin2016domain,albuquerque2019generalizing,atzmon2020causal,chen2021style} often build their methods upon the marginal independence assumption, where the causal feature is considered independent of the non-causal one (\ie $X_o \Perp X_d$), and the concept is often implemented by leveraging a classical dual-branching network where each type of feature is extracted through separate branches. We follow this classical design and employ Hilbert-Schmidt Information Criterion (HSIC) \cite{gretton2005measuring,gretton2007kernel} as a measurement for the feature independence. A glimpse of this classical dual-branch design is presented in Fig~\ref{fig pipeline}.

However, despite its simplicity, there is a major concern regarding this common design: the independence constraint alone may be insufficient to determine the optimal causal feature due to the ill-posed nature of this problem~\cite{atzmon2020causal}. To ease this concern, we follow suggestions from existing arts~\cite{atzmon2020causal,halva2020hidden,hyvarinen2016unsupervised} by enforcing more constraints on the causal feature. By the d-separation criterion, we observe that the causal feature $X_o$ further satisfies $X_o \Perp D \vert O$.   
Based on this observation, we propose to use the following two strategies as complements for the classical marginal independent-based framework.

\begin{figure}
\centering
	\begin{minipage}[b]{0.78\linewidth}
		\centering
		\centerline{
			\includegraphics[width =\linewidth]{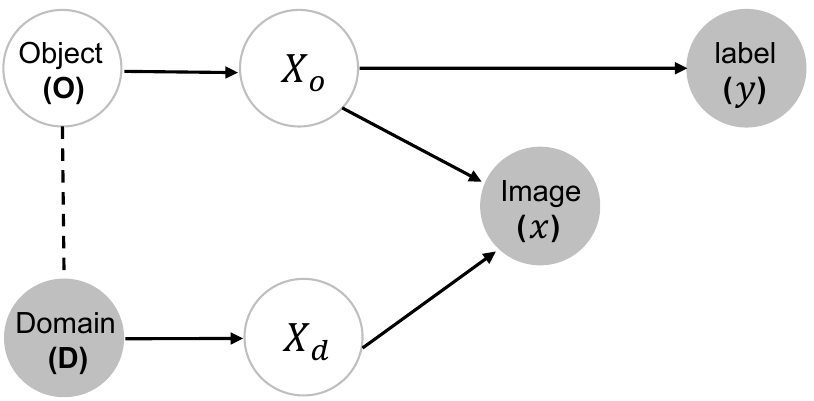}}
	\end{minipage}
	\caption{Structural causal models for the image generation process. Observed variables are shaded. Solid arrows represent causal relations. The dashed line denotes there is entanglement between the two variables. We consider images to be generated by interventions on the ``Object" and ``Domain" variables which causes the causal semantic features $X_o$ and non-causal domain-specific features $X_d$.}
	\vspace{-0.3 cm}
	\label{fig scm}
\end{figure}

\textbf{First, $X_o \Perp D \vert O$ implicitly implies that there should be no association between causal and non-causal features from the same object.} This idea can provide constructive guidance in the corresponding network structure design: the representation extractor utilized for the causal feature should remain independent of the one for the non-causal feature. To this end, we argue that the common practice, which uses a shared base feature extractor with two lightweight prediction heads in the classical dual-branching structure~\cite{ganin2016domain,chen2021style,atzmon2020causal}, will be detrimental to the performance as the causal feature extractor will rely heavily on the non-causal feature with this setting. Instead, we suggest an early-branching architecture, where the two branches share the first few blocks while diverging thereafter, for the dual-branch network. We further provide theoretical and experimental evidence to support this argument in Sec.~\ref{sec complets} and~\ref{sec ana}.

\textbf{Second, $X_o \Perp D \vert O$ implies that the causal feature remains invariant across different domains for the same object.} However, directly implementing this idea is non-trivial: object information is often unavailable in existing datasets. Instead of pairing samples heuristically based on their feature distance~\cite{atzmon2020causal}, we treat each image as a distinct object and employ augmentation to simulate samples from diverse domains that correspond to the same object. 
To generate unlimited and diverse domains, we further introduce a new random domain sampling (RDS) approach that expands the domain types by incorporating features with randomly altered styles. Unlike previous methods \cite{zhou2021domain,li2022uncertainty} that synthesize new domains by mixing or adding noise, RDS can synthesize diverse domains by directly perturbing the styles of the feature maps with adjustable intensity.  

Derived from the conditional independent constraint between the causal feature and domain, the main contributions of this work can be summarized in twofold,
\begin{itemize}
    \item We theoretically prove that the common network structure design, which obtains the causal and non-causal features with a shared base feature extractor and two lightweight prediction heads, should be detrimental to the performance. This work argues that an early-branching structure should be imposed on the classical dual-branch pipeline, which is further validated by experiments in different network backbones.
    \item We suggest that the augmentation idea can be utilized to implement the conditional independent constraint, and we introduce a random domain sampling strategy for augmentation, which can synthesize unlimited diverse domains.
\end{itemize}

Through our experimental studies in current benchmarks~\cite{gulrajani2020search,koh2021wilds}, we illustrate that the proposed two strategies are complementary for the independent-based framework, and our complete method can perform favorably against existing state-of-the-art models.

\section{Related Works}
Various approaches have been proposed to tackle the distribution shift problem lately\cite{muandet2013domain,shi2021gradient,pandey2021generalization,rame2021ishr,harary2022unsupervised,zhou2021domain,li2022uncertainty,xu2021fourier,li2021simple,li2018domain,li2018deep,li2018learning,balaji2018metareg,dou2019domain,li2019episodic,chen2022mix,chen2022compound,chen2022self,chen2022ost,chen2023improved,chen2023domain,wang2022out,wang2022causal}. This section briefly reviews works that are highly associated with ours.

\noindent\textbf{Invariant representation learning.}
The pioneering work \cite{ben2006analysis} theoretically proved that if the features remain invariant across different domains, then they are general and transferable to different domains. Inspired by this theory, many arts aim to use deep networks to explore domain-invariant features. 
For example, \cite{ganin2016domain} trains a domain-adversarial neural network to obtain domain-invariant features by maximizing the domain classification loss. This idea is further explored by \cite{li2018domain}. They employ a maximum mean discrepancy constraint for the representation learning of an auto-encoder via adversarial training. Instead of directly obtaining the semantic features, some arts \cite{khosla2012undoing,li2017deeper} suggest decomposing the model parameters into domain-invariant and domain-specific parts and only using the domain-invariant parameters for prediction. In \cite{cha2022domain}, the learned features are enforced to be similar to that from the pretrained model in ensuring invariance. This idea is further improved by \cite{chen2023domain}. The rational matrices, which represent the mixed results of features and classifier weights, are enforced to be similar for samples belonging to the same category. Recently, the task has been further explored at a gradient level. \cite{koyama2020invariance} learn domain-invariant features by minimizing the variances of inter-domain gradients. Inspired by the fact that optimization directions should be similar across domains, \cite{shi2021gradient} maximize the gradient inner products between domains to maintain invariance. 

\begin{figure}
\centering
	\begin{minipage}[b]{0.9\linewidth}
		\centering
		\centerline{
			\includegraphics[width =\linewidth]{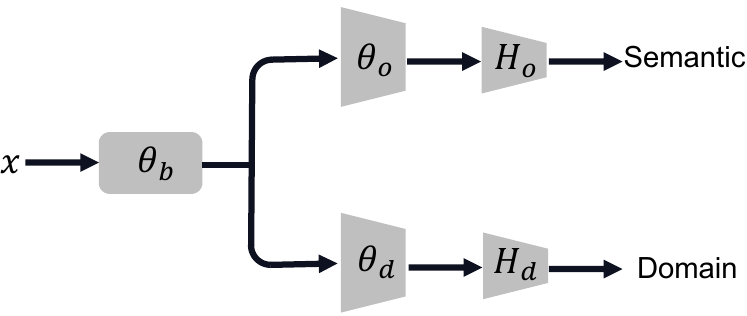}}
	\end{minipage}
	\caption{A glimpse of the classical dual-branch network for modeling independence between causal and non-causal features \cite{ganin2016domain,albuquerque2019generalizing,atzmon2020causal,chen2021style}, where the two branches share a same base feature extractor (\ie $\theta_b$) with two light-weight prediction heads. Here the upper branch is used to extract the causal features (\ie $F_o(x) = \theta_b (\theta_o (x))$), and the lower branch for extracting the non-causal features (\ie $F_d(x) = \theta_b (\theta_d (x))$). The independent constraint is enforced between $F_o(x)$ and $F_d(x)$. $H_o$ and $H_d$ are classifiers for the corresponding branches. Only the upper branch is utilized during inference.}
	\label{fig pipeline}
 \vspace{-0.4 cm}
\end{figure}

\noindent\textbf{Augmentation.}
Besides the typical augmentation strategies, such as rotation, flipping, and cropping, there are some methods that are specially designed for the DG task. For example, \cite{yan2020improve,chen2022mix} apply the mixup idea \cite{zhang2017mixup} for DG by directly mixing images and labels from a batch. Instead of mixing in the image level, \cite{kim2021selfreg} proposes to mix the feature representations, \cite{xu2021fourier} suggests linearly interpolating the amplitude spectrums of different samples. How to synthesize new samples for training is also studied in the literature: generative modeling is used in \cite{zhou2020learning,carlucci2019hallucinating} to create new domains and domain-agnostic images. Inspired by the success in the related area, \cite{zhou2021domain,nam2021reducing} use AdaIN \cite{huang2017arbitrary} to synthesize new domains by mixing the styles of the features. To further explore the idea, \cite{li2022uncertainty} propose to add perturbations to the styles based on their variances, and \cite{kang2022style} suggests storing seen styles in a queue and consistently generating distinct styles for mixing.
Despite certain improvements, their augmented domains may still suffer from homogeneity. To obtain unlimited and diverse domains, we assume the styles of the features to follow normal distributions. Consequently, we can sample unlimited diverse domain information that derives from the original styles. By integrating the augmented features in an effective framework, our method performs favorably against existing arts in the DomainBed benchmark \cite{gulrajani2020search}.

\noindent\textbf{Causality-inspired DG.}
Using causality to describe the generalization problem has been studied in some previous methods~\cite{peters2016causal,christiansen2021causal,scholkopf2012causal,zhang2013domain}. The pioneer art~\cite{gong2016domain} assumes the images are generated by the ``label" variable, and they further use the generation graph to learn invariant components that are transferable for different domains. The similar generation graph is further utilized and extended in~\cite{magliacane2018domain,rojas2018invariant,heinze2021conditional}. Meanwhile, there are other works on connection causality~\cite{arjovsky2019invariant,peters2016causal} that consider the opposite where ``image" is the cause and ``label" is the effect. The causal model used in our work can be traced in~\cite{mahajan2021domain,atzmon2020causal,liu2021learning} where ``object" and ``domain" are both considered causes for the image. Different from previous approaches that use generative models~\cite{atzmon2020causal,liu2021learning} or feature matching~\cite{mahajan2021domain} to localize the causal features, we improve the localization with a more reasonable early-branching structure and an effective augmentation scheme, which are both inspired by the conditional invariant finding.
Some recent studies also focus on using causality for DG. For example, \cite{wang2022causal} uses a balance score to sample mini-batches, which is expected to be spurious-free and can produce minimax optimal to identify causal features. Instead of specifically identifying causal features among different domains, \cite{wang2022out} theoretically reveals that using a subset of causal invariant transformations to modify the input data can also achieve minimax optimality across all the domains of interest.

\section{Basic Framework}
\subsection{Preliminary} 
Let $\mathcal{X}$ denote the image space, and $\mathcal{Y}$ represent the label space. In the vanilla domain generalization (DG) problem, we are often given $M$ source domains, denoting as $\mathcal{D}_s = \{\mathcal{D}_1,\mathcal{D}_2,\cdots,\mathcal{D}_M \}$ that sampled from different probability distributions on the joint space $\mathcal{X} \times \mathcal{Y}$. Then, each $\mathcal{D}_m \in \mathcal{D}_s$ can be represented as $\mathcal{D}_m = \{(x_i^m, y_i^m, d_i^m) \}_{i=1}^{N_m}$, where $d^m$ is the domain label and $N_m$ is the number of data label pairs in the $m^{th}$ domain. The task of vanilla DG is to learn a model from $\mathcal{D}_s$ for making predictions on an unseen $\mathcal{D}_{M+1}$ domain. Denoting the feature space as $\mathcal{C}$, the baseline ERM method can be regarded as finding functions $F: \mathcal{X} \rightarrow \mathcal{C}$ and $H: \mathcal{C} \rightarrow \mathcal{Y}$ such that $H(F(x))$ leads to optimal results for $\forall x \in \mathcal{D}_{M+1}$.

\subsection{Marginal Independent-Based Disentanglement}
\label{sec frame}
Same as existing arts~\cite{ganin2016domain,albuquerque2019generalizing,atzmon2020causal,chen2021style}, our basic framework aims to ease disentanglement using the classical marginal independence assumption where the causal and non-causal features are considered independent of each other, corresponding to $X_o \Perp X_d$ by d-separation in the SCM from Fig.~\ref{fig scm}. The idea is fulfilled by explicitly enforcing independent regularizations upon $X_o$ and $X_d$.
Since both $X_o$ and $X_d$ are unobserved, we follow the existing practice by learning them separately through representation functions $F_o$ and $F_d$: $\mathcal{X} \rightarrow \mathcal{C}$.

Different from previous methods, we use Hilbert-Schmidt Information Criterion (HSIC) \cite{gretton2005measuring,gretton2007kernel} as the independence measure to regularize the statistic dependency between features from $F_o$ and $F_d$, and the ideal loss-minimizing function $F_o^{\ast}$ that achieves disentanglement can be written as,
\begin{equation}
\begin{aligned}
\label{eq ind}
&\arg \min_{F_o(x)} \text{HSIC} (F_o(x), F_d(x)) \\
= & \arg \min_{F_o(x)} \frac{1}{(b-1)^2}\textbf{tr}(\textbf{KHLH}),
\end{aligned}
\end{equation}
where $b$ is the batch size; $\textbf{H} = \textbf{I} - \frac{1}{b}$ is a centering matrix; $\textbf{tr}(\cdot)$ computes the trace of the given matric;{following the pratice in \cite{atzmon2020causal}, we use linear kernels for computing HSIC}: $\textbf{K} = F_o(x)' F_o(x)'^{\text{T}}$; $\textbf{L} = F_d(x)' F_d(x)'^{\text{T}}$, given that $F_o(x)' = \frac{F_o(x)}{\Vert F_o(x) \Vert }$, and similar for $F_d(x)'$. Notably, different from using Gaussian kernel where $\textbf{K}_{a, b} = \exp (- \frac{( F_o(x)'_a - F_o(x)'_b) ^2}{\sigma^2})$ ($a,b$ are the entry indexes and $\sigma$ is the variance), using linear kernels does not involve kernel and parameter selection, avoiding quadratic computational complexity, thus can be applied to process large-size features.

Note the optimal objective is to find the ideal $F_o^{\ast}$ to obtain the target causal feature that is used for predicting the semantical class. Thus the independent constraint will not be utilized to find $F_d^{\ast}$. Meanwhile, one may argue that there are other independent constraints, such as orthogonal and correlation minimization. We show in Sec.~\ref{sec ana} that we adopt HSIC because it is more effective than other alternatives. 
Following the prevalent designs~\cite{ganin2016domain,albuquerque2019generalizing,atzmon2020causal,chen2021style}, we implement $F_o$ and $F_d$ by utilizing a dual-branch network with a shared base feature extractor ($\theta_b \triangleq F_o \cap F_d$ in Fig.~\ref{fig pipeline}) and two branches ($\theta_o \triangleq F_o \backslash (F_o \cap F_d)$ and $\theta_d \triangleq F_d \backslash (F_o \cap F_d)$ in Fig.~\ref{fig pipeline}) processing different representations. 

\begin{figure}
\centering
	\begin{minipage}[b]{0.49\linewidth}
		\centering
		\centerline{
			\includegraphics[width =\linewidth]{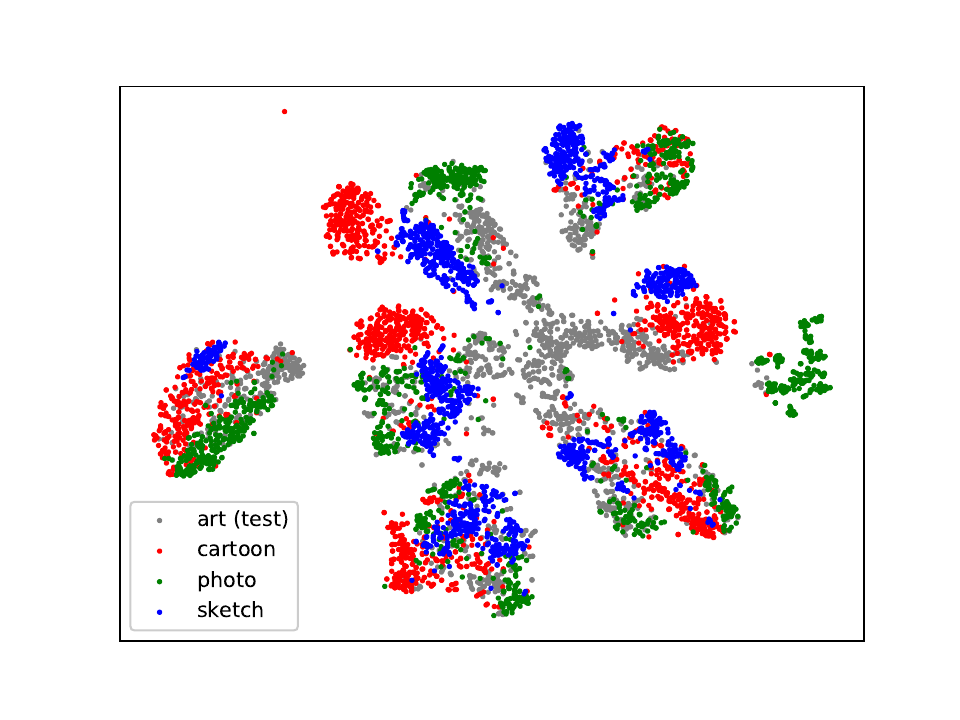}}
			\centerline{\small{(a) ERM}}
	\end{minipage}
	\begin{minipage}[b]{0.49\linewidth}
		\centering
		\centerline{
			\includegraphics[width =\linewidth]{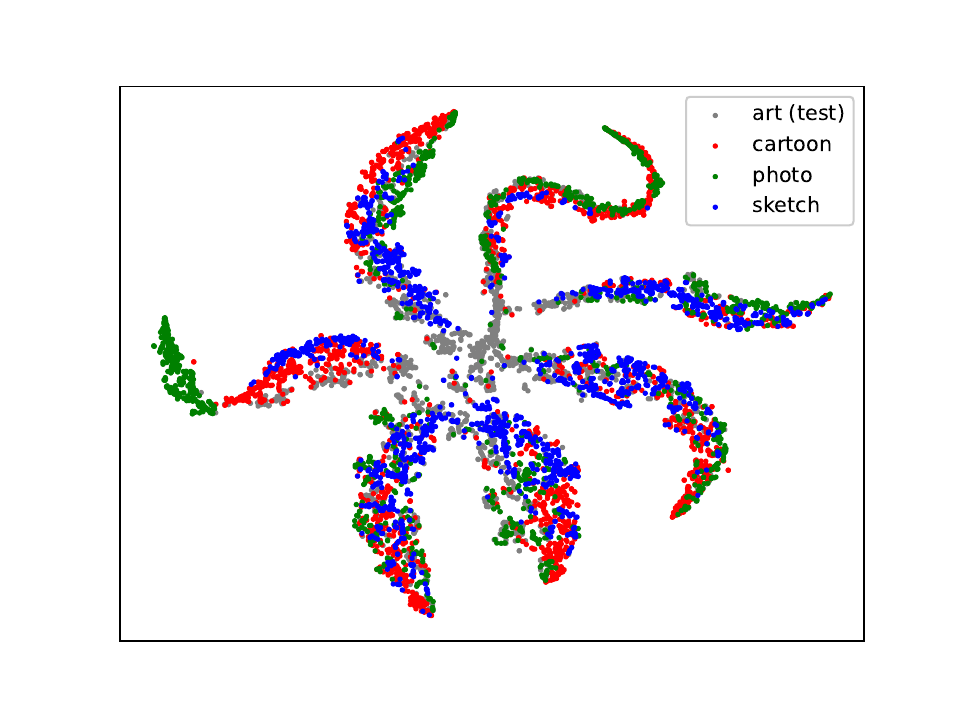}}
			\centerline{\small{(b) Independent-based framework}}
	\end{minipage}
	\caption{2D t-SNE visualizations of target semantic representations from the ERM model and our basic form. The PACS dataset \cite{li2017deeper} is used with art as the unseen target domain. The seven clusters in (a) and (b) denote the corresponding classes. The domain information is more obvious in the target features from the ERM model, indicating that ERM tends to learn entangled domain-specific and semantic features. In comparison, using the independent constraint can better disentangle the two features, resulting in less domain information in the target features.}
	\label{fig tsne}
 \vspace{-0.4 cm}
\end{figure}

Visualized target causal features from the baseline ERM method and the adopted independent-based framework are plotted in Fig. \ref{fig tsne} (a) and (b). We note that although both methods can separate different classes, our target features tend to include less domain information as feature points from different domains are mixed. In comparison, we observe that target features from different domains are clearly separated in ERM, indicating that ERM encodes more non-causal domain-specific information in their target features. These results validate that the basic independent-based regularization term can help disentangle the causal and non-causal features.

\section{Methodology}
\subsection{Complements for the Basic Framework}
\label{sec complets}
As stated in Proposition 2 in \cite{atzmon2020causal}, identifying the causal features is non-trivial as multiple $X_o$ can lead to the same observed data (\ie $x, y, d$). This property holds even with the independent constraint $X_o \Perp X_d$. A reasonable approximation for this dilemma is to use more constraints to find an invariant that can characterize $X_o$~\cite{halva2020hidden,hyvarinen2016unsupervised}.
Using d-separation, we can observe in Fig.~\ref{fig scm} that $X_o$ is d-separated by $D$ given $O$. Denoting this observation as $X_o \Perp D \vert O$, in the following, we show that it can introduce two effective strategies that can serve as complements for the classical marginal independent-based framework.

\textbf{First}, given the conditional independent observation $X_o \Perp D \vert O$, we present the following proposition:
\begin{prop}
\label{prop}
$X_o$ and $X_d$ from the same object should not have any association, indicating that when designing the network, the branch used for extracting the causal feature should not depend on the non-causal one.   
\end{prop}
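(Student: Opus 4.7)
The plan is to break the proposition into two stages handled in sequence. The first stage is to establish the conditional independence $X_o \Perp X_d \vert O$ from the given $X_o \Perp D \vert O$; this is the precise content of ``no association between $X_o$ and $X_d$ from the same object'', since conditioning on $O$ is exactly what ``same object'' means. The second stage is to translate this statistical requirement into an architectural constraint on the classical dual-branch network of Fig.~\ref{fig pipeline}.

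For the first stage, I would use the generative structure of the SCM in Fig.~\ref{fig scm}: $X_d$ is a measurable function of $D$ together with exogenous noise $\epsilon_d$ that, by construction, is independent of $(O, X_o)$. Combining this with the given $X_o \Perp D \vert O$ through standard properties of conditional independence (closure under measurable transformations of the conditioning complement) yields $X_o \Perp X_d \vert O$. Equivalently, by d-separation applied directly to the SCM, every undirected path between $X_o$ and $X_d$ passes through $O$ or $D$; since $O$ and $D$ are independent root intervention variables and $O$ is conditioned upon, all such paths are blocked, giving the same conclusion.

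For the second stage, I would model the classical dual-branch network as $F_o(x) = \theta_o(\theta_b(x))$ and $F_d(x) = \theta_d(\theta_b(x))$, treating $x$ as a random variable induced by the generative process. Because both heads are trained to extract meaningful content, $\theta_b(x)$ necessarily encodes information about both $X_o$ and $X_d$; this shared intermediate couples the two outputs, so $F_o(x)$ and $F_d(x)$ generally fail to be independent given $O$, violating the target relation $X_o \Perp X_d \vert O$ established in the first stage. Consequently, the computational pathway of $F_o$ must avoid sharing components that carry non-causal information with $F_d$, i.e., the causal branch should not depend on the non-causal one. The claim can be sharpened by quantifying the leakage through $\theta_b$ via a data-processing-inequality-style argument, which is strictly positive whenever $\theta_b$ is informative about $D$.

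The main obstacle I anticipate is bridging the gap between the probabilistic independence (about the true latent features) and the architectural implication (about deterministic functions computed by a network). Making this precise requires an auxiliary faithfulness assumption that the learned $F_o, F_d$ track the latent $X_o, X_d$, or at least that the training objective penalizes violations of the independence they should inherit. A secondary subtlety is that fully disjoint branches are often impractical and would discard useful low-level primitives, so the clean conclusion is best read as an idealized limit that the subsequent early-branching design approximates in practice.
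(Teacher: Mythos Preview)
Your proposal is sound but follows a different route from the paper. You proceed probabilistically: first derive $X_o \Perp X_d \mid O$ from $X_o \Perp D \mid O$ via d-separation, then argue that a shared backbone $\theta_b$ induces statistical coupling between $F_o(x)$ and $F_d(x)$ that violates this conditional independence. The paper instead reads $X_o \Perp D \mid O$ directly as an \emph{invariance} statement on the deterministic extractor, namely $F_o^{\ast}(x^m)=F_o^{\ast}(x^n)$ whenever $x^m,x^n$ share the same object, and then runs a short contradiction: for the same object, $F_d(x^m)\neq F_d(x^n)$ because the non-causal feature is domain-dependent, so any $F_o$ that functionally depends on $X_d$ cannot satisfy the invariance. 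Your route is more explicitly grounded in the SCM and makes the ``no association'' claim precise as a conditional-independence relation, but it pays for this by needing the faithfulness-type bridge you yourself flag as the main obstacle. The paper's argument avoids that bridge entirely by staying at the level of deterministic function equality across domains; this is simpler and exactly the form later exploited by the augmentation-based consistency loss, though it is correspondingly less formal about what ``$F_o$ depends on $X_d$'' means.
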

\begin{proof}
\label{proof}
Given the optimal $F_o^{\ast}$, if the conditional independent constraint $X_o \Perp D \vert O$ is satisfied, then for $\forall$ domains $m, n$ and $\forall$ object $o$, we have $F_o^{\ast}(x^m) = F_o^{\ast}(x^n)$, where $x^m$ and $x^n$ correspond to the same object $o$. 
Now consider a simple counter-example. Given inputs $x'^m$, $x'^n$ from the same object $o'$, due to domain-dependent variation, the domain-specific (non-causal) features for these two inputs will be different: $F_d(x'^m) \neq F_d(x'^n)$. Since $F_o$ is a deterministic function regarding $x$ that is not independent of $X_d$, then for the same object that has different $X_d$, $F_o(x)$ will be different. In this case: $F_o(x'^m) \neq F_o(x'^n)$, violating the conditional independence constraint. Therefore, the optimal $F_o^{\ast}$ should not depend on the domain-specific feature. 
\end{proof}

\begin{figure}
\centering
	\begin{minipage}[b]{0.95\linewidth}
		\centering
		\centerline{
			\includegraphics[width =\linewidth]{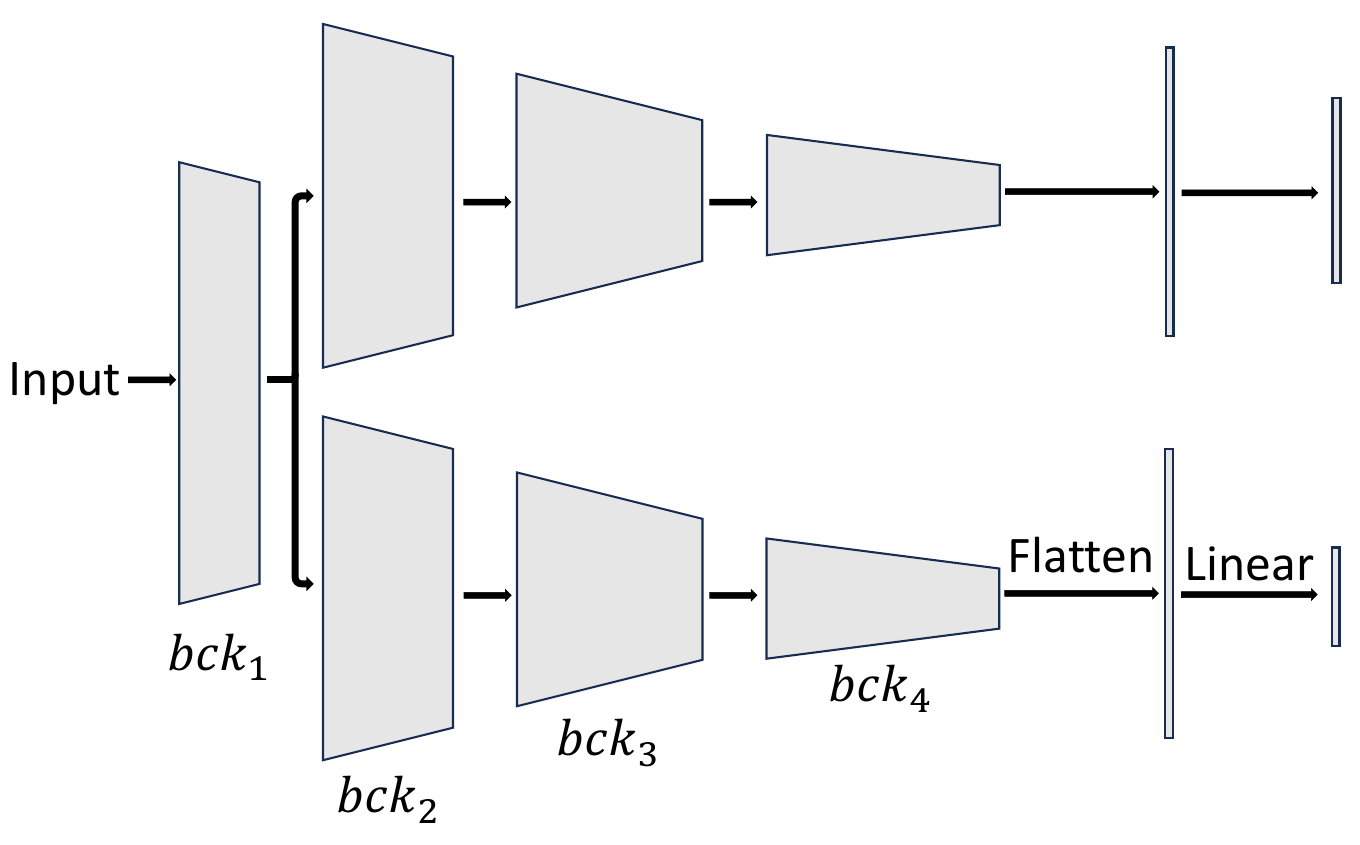}}
	\end{minipage}
	\caption{{An example diagram of the early branching structure upon the ResNet backbone \cite{he2016deep}. Here `$bck$' indicates the four layers in the ResNet backbone. Similar to that in Fig.~\ref{fig pipeline}, the top and bottom branches are for the semantic and domain estimation tasks, respectively.}}
	\vspace{-0.4 cm}
	\label{fig diag}
\end{figure}

 Based on the proposition, we argue that the common practice~\cite{ganin2016domain,albuquerque2019generalizing,atzmon2020causal,chen2021style} of using a shared base feature extractor with two lightweight predicting heads for the dual-branch structure is detrimental to the performance, as such design will result in $F_o$ to rely heavily on $X_o$. Hence, we suggest an early-branching structure where the two branches share the first few blocks and diverge thereafter to implement the dual-branch structure. Experimental support for this argument can be found in Sec.~\ref{sec early}. Note the sharing of early blocks for the two branches does not violate the constraints, because deep networks often learn low-level features in their early blocks, which are identical in $F_o$ and $F_d$ when extracting the high-level causal and non-causal features, and thus can be shared between them. {An example diagram of the early branching structure upon the ResNet backbone \cite{he2016deep} is shown in Fig.~\ref{fig diag}.} This design can also enable our design for the augmentation part, please refer to Sec.~\ref{sec aug} for details.

\textbf{Second}, the observation $X_o \Perp D \vert O$ implies that $X_o$ remains unchanged across different domains for the same object. However, implementing this concept can be challenging because the object information is often unavailable in existing datasets. Rather than heuristically pairing samples based on their feature distance~\cite{atzmon2020causal}, we consider each image as an object, and we suggest using augmentation to create samples that correspond to the same object but with different domains. Then, the consistency constraint can be fulfilled by requiring causal features from the original sample and its corresponding augmented version to be equal.  

Formally, we use $\mathcal{A}$ to denote a domain augmentation operator. As depicted in Fig.~\ref{fig aug}, in our design, we wish $\mathcal{A}$ could modify images in a way that changes the non-causal domain information of the given sample while preserving the causal semantic information. Then the conditional consistency is fulfilled by minimizing the following,
\begin{equation}
\label{eq contrast}
\begin{aligned}
 \mathcal{L}_{cons} = &\Vert F_o(x) - \text{MLP}(F_o(\mathcal{A}(x))) \Vert_1 \\ &+ \Vert \text{MLP}(F_o(x)) - F_o(\mathcal{A}(x)) \Vert_1,
\end{aligned}
\end{equation}
where $\Vert \cdot \Vert_1$ is the $\ell_1$ norm. We use $\|\text{MLP}(\mathbf{a}) - \mathbf{b}\|$ to measure the compatibility between $\mathbf{a}$ and $\mathbf{b}$, which can prevent representation collapse \cite{grill2020bootstrap,kim2021selfreg}. In our design, $\mathcal{A}$ is operated at the feature level and applied to the features extracted from $\theta_b$ (\ie $\mathcal{A}(x) \triangleq \mathcal{A}(\theta_b(x))$) so that both the two branches can be benefited. We detailed the operator $\mathcal{A}$ in the following section.

\begin{figure}
\centering
	\begin{minipage}[b]{0.78\linewidth}
		\centering
		\centerline{
			\includegraphics[width =\linewidth]{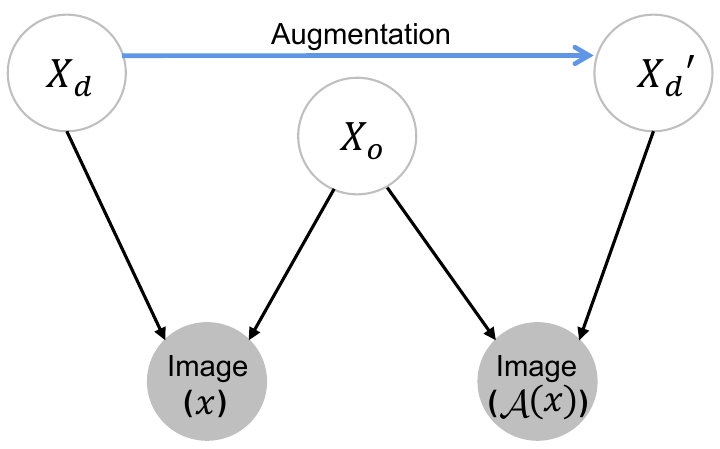}}
	\end{minipage}
	\caption{Overview of our augmentation strategy. We wish the augmentation operator $\mathcal{A}$ to change the non-causal domain information of a sample while keeping the causal feature unchanged.}
	\vspace{-0.4 cm}
	\label{fig aug}
\end{figure}

\subsection{A New Domain Augmentation Strategy}
\label{sec aug}
To better fulfill the conditional consistency regularization, we design a random domain sampling (RDS) strategy to augment the domain of a sample while preserving its semantic information. The skill in \cite{zhou2021domain} is adopted for implementation, which shows that new domains can be obtained by mixing the styles of two different samples. In \cite{zhou2021domain}, the styles are the mean $\mu \in \mathbb{R}^{b\times c}$ and standard derivation $\sigma^2 \in \mathbb{R}^{b\times c}$ of activations across all spatial locations and all samples in a batch, given a set of $b\times c\times h\times w$ sized feature maps, where $b, c, h$, and $w$ are batch, channel, height, and width of the feature map.   

However, modifying $\mu$ and $\sigma^2$ by directly mixing with other image styles may create more homogeneous features when confronting similar domain types within a batch, resulting in the same $X_d$. RDS solves this problem by perturbing the styles with a controllable strength. Specifically, we first calculate the mean and variance of activations from each image, denoted as $\mu_i$ and $\sigma^2_i$. Then, we build a probabilistic model for $\mu_i$ and $\sigma^2_i$, which is done by assuming they follow the Gaussian distribution and then estimating the model parameters within a batch. Finally, we can sample new $\mu$ and $\sigma^2$ from the Gaussian distribution as new styles. Take $\mu$ as an example. The above procedure can be written as:
\begin{equation}
\begin{small}
\label{eq style}
\mu \sim \mathcal{N}(\mathop{\mathbb{E}}\nolimits_{\mu}, \Sigma_{\mu}),~~\text{s.t.}
\begin{cases}
\begin{aligned}
&\mathop{\mathbb{E}}\nolimits_{\mu} = \frac{1}{b} \sum_{i=1}^{b}\mu_i,\\
&\Sigma_{\mu}=\frac{1}{b} \sum_i^b (\mu_i - \mathop{\mathbb{E}}\nolimits_{\mu})(\mu_i - \mathop{\mathbb{E}}\nolimits_{\mu})^{\text{T}}.\\
\end{aligned}
\end{cases}
\end{small}
\end{equation} 
To obtain inhomogeneous styles, we only accept sampled $\mu$ whose density is less than $\epsilon$. The accepted samples, denoted as $\hat{\mu}$ should satisfy,
\begin{equation}
\begin{aligned}
\label{eq sample}
\frac{1}{(2\pi)^{\frac{b\times c}{2}} \vert \Sigma_{\mu} \vert^{\frac{1}{2}}} \exp \left( -\frac{1}{2} (\hat{\mu} - \mathop{\mathbb{E}}\nolimits_{\mu})^{\text{T}} \Sigma_{\mu}^{-1} (\hat{\mu} - \mathop{\mathbb{E}}\nolimits_{\mu}) \right) < \epsilon ,
\end{aligned}
\end{equation}
where $\epsilon$ is sufficiently small (\ie set to be 0.0001 in our implementation) so that the sampled $\hat{\mu}$ can be distinct from the original $\mu$. An illustration of RDS is shown in Fig.~\ref{fig samp}. We apply the same sampling process to obtain $\hat{\sigma}$.

After sampling new styles, we follow the protocol in AdaIN \cite{huang2017arbitrary} by replacing the styles from the original representation with the sampled ones, and the final formulation for our domain augmentation can be written as,
\begin{equation}
\label{eq rds}
\mathcal{A}(x) = \hat{\sigma} \frac{\theta_b(x) - \mu}{\sigma} + \hat{\mu}.
\end{equation}
Different from the previous work \cite{li2022uncertainty,zhou2021domain} that specifically adds normal perturbations to their styles or borrows that from others, RDS manages to control how the sampled styles deviated from the existing ones thanks to its probabilistic modeling of $\mu_i$ and $\sigma^2_i$, thus preventing the augmentation strategy from creating more homogeneous domains. We provide more detailed comparisons between our RDS and previous ideas that are based on AdaIN (\ie MixStyle \cite{zhou2020learning} and DSU \cite{li2022uncertainty}) in Sec.~\ref{sec aug} and~\ref{sec moreaug}. 

\begin{figure}
\centering
	\begin{minipage}[b]{0.8\linewidth}
		\centering
		\centerline{
			\includegraphics[width =\linewidth]{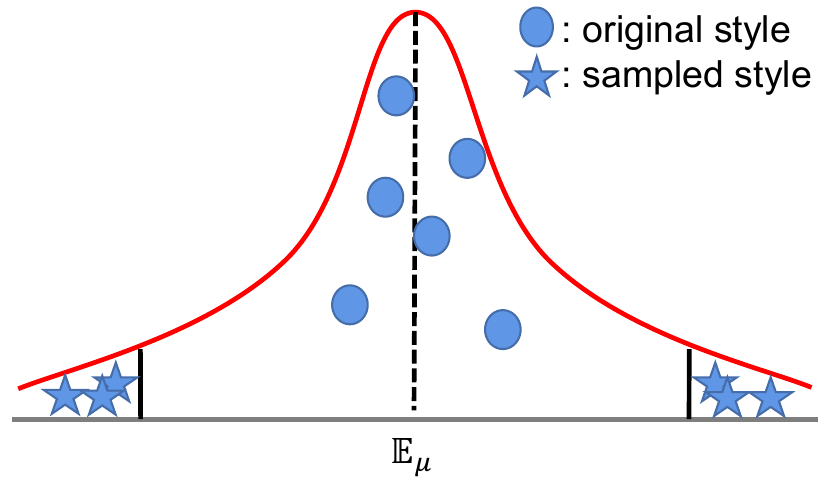}}
	\end{minipage}
	\caption{A simple demonstration of the proposed random domain sampling (RDS) strategy.}
	\label{fig samp}
 \vspace{-0.3 cm}
\end{figure}

\subsection{Learning Objective}
Without the loss of generality, we also impose independent constraints on the augmented features. Combining the standard classification loss $\mathcal{L}_{cls}$:
\begin{equation}
\mathcal{L}_{cls} = \sum_{\{x_1\} \in \{x, \mathcal{A}(x)\}}\text{CE}(H_o(F_o(x_1)), y) + \text{CE}(H_d(F_d(x)), d),
\end{equation}
where CE denoting the cross-entropy loss. Our training objective is to find $F_o^{\ast}$ and $F_d^{\ast}$ that minimizes $\mathcal{L}_{all}$,
\begin{equation}
\begin{aligned}
\label{eq allloss}
\mathcal{L}_{all} = \mathcal{L}_{cls} + \alpha \mathcal{L}_{indp} + \beta \mathcal{L}_{cons},
\end{aligned}
\end{equation}
where $\mathcal{L}_{indp} = \sum_{\{x_1, x_2\} \in \{x, \mathcal{A}(x)\}}\text{HSIC}(F_o(x_1), F_d(x_2))$ derives from the basic marginal independence constraint; $\alpha$ and $\beta$ are positive weights. Note the augmented features do not contain domain labels. In order to enable $F_d^{\ast}$ to be informed of the augmented features, we further expand $\mathcal{L}_{cons}$ by adding an inverse consistency constraint: $ - ([\Vert F_d(x) - \text{MLP}(F_d(\mathcal{A}(x)))\Vert_1 - n]_{-} + [\Vert \text{MLP}(F_d(x)) - F_d(\mathcal{A}(x)) \Vert_1 - n]_{-})$, where $[\cdot]_- = \min(\cdot,0)$, and the margin $n$ is set to be the maximum value of the $\ell_1$ distances computed from the current batch~\cite{yang2021adversarial}.   

\begin{table*}[t]
\centering
\caption{Evaluations on the DomainBed benchmark \cite{gulrajani2020search} using the ResNet18 backbone. All methods are examined for $3 \times 20$ times in each unseen domain. Here Top5 accumulates the number of datasets where a method achieves the top 5 performances; Score here accumulates the numbers of the dataset where a specific art performs better than ERM; Best and second best results are presented in \textbf{bold} and \underline{undelined} types. Our method performs favorably against existing arts regarding all 3 criteria.}
\scalebox{1}{
\begin{tabular}{lC{1.8cm}C{1.8cm}C{1.8cm}C{1.8cm}C{1.8cm}C{0.7cm}|C{0.7cm} C{0.7cm}}
\toprule 
& PACS & VLCS & OfficeHome & TerraInc & DomainNet & Avg. & Top5 &Score\\
\hline \hline
MMD \cite{li2018domain} &81.3  $\pm$  0.8 &74.9  $\pm$  0.5 &59.9  $\pm$  0.4 &42.0  $\pm$  1.0 &7.9  $\pm$  6.2 &53.2 &1 &2\\
RSC \cite{huang2020self} &80.5  $\pm$  0.2 &75.4  $\pm$  0.3 &58.4  $\pm$  0.6 &39.4  $\pm$  1.3 &27.9  $\pm$  2.0 &56.3 &0 &2\\
IRM \cite{arjovsky2019invariant} &80.9  $\pm$  0.5 & 75.1  $\pm$  0.1 &58.0  $\pm$  0.1 &38.4  $\pm$  0.9 &30.4  $\pm$  1.0 &56.6 &0 &1\\
ARM \cite{zhang2020adaptive} &80.6  $\pm$  0.5 &75.9  $\pm$  0.3 &59.6  $\pm$  0.3 &37.4  $\pm$  1.9 &29.9  $\pm$  0.1 &56.7 &0 &2 \\
DANN \cite{ganin2016domain} &79.2  $\pm$  0.3 &76.3  $\pm$  0.2 &59.5  $\pm$  0.5 &37.9  $\pm$  0.9 &31.5  $\pm$  0.1 &56.9 &1 &1 \\
GroupGRO \cite{sagawa2019distributionally} &80.7  $\pm$  0.4 &75.4  $\pm$  1.0 &60.6  $\pm$  0.3 &41.5  $\pm$  2.0 &27.5  $\pm$  0.1 &57.1 &0 &2 \\
CDANN \cite{li2018deep} &80.3  $\pm$  0.5 &76.0  $\pm$  0.5 &59.3  $\pm$  0.4 &38.6  $\pm$  2.3 &31.8  $\pm$  0.2 &57.2 &0 &2 \\
VREx \cite {krueger2021out} &80.2  $\pm$  0.5 &75.3  $\pm$  0.6 &59.5  $\pm$  0.1 &\textbf{43.2  $\pm$  0.3} &28.1  $\pm$  1.0 &57.3 &1 &2 \\
CAD \cite{ruan2021optimal} &81.9  $\pm$  0.3 &75.2  $\pm$  0.6 &60.5  $\pm$  0.3 &40.5  $\pm$  0.4 &31.0  $\pm$  0.8 &57.8 &1 &2 \\
CondCAD \cite{ruan2021optimal} &80.8  $\pm$  0.5 &76.1  $\pm$  0.3 &61.0  $\pm$  0.4 &39.7  $\pm$  0.4 &31.9  $\pm$  0.7 &57.9 &0 &4\\
MTL \cite{blanchard2021domain} &80.1  $\pm$  0.8 &75.2  $\pm$  0.3 &59.9  $\pm$  0.5 &40.4  $\pm$  1.0 &35.0  $\pm$  0.0 &58.1 &0 &2\\
ERM \cite{vapnik1999nature} &79.8  $\pm$  0.4 &75.8  $\pm$  0.2 &60.6  $\pm$  0.2 &38.8  $\pm$  1.0 &35.3  $\pm$  0.1 &58.1 &0 &-\\
MixStyle \cite{zhou2021domain} &\textbf{82.6  $\pm$  0.4} &75.2  $\pm$  0.7 &59.6  $\pm$  0.8 &40.9  $\pm$  1.1 &33.9  $\pm$  0.1 &58.4 &1 &2\\
MLDG \cite{li2018learning} &81.3  $\pm$  0.2 &75.2  $\pm$  0.3 &60.9  $\pm$  0.2 &40.1  $\pm$  0.9 &35.4  $\pm$  0.0 &58.6 &1 &4 \\
Mixup \cite{yan2020improve} &79.2  $\pm$  0.9 &76.2  $\pm$  0.3 &61.7  $\pm$  0.5 &42.1  $\pm$  0.7 &34.0  $\pm$  0.0 &58.6 &2 &3 \\
Fishr \cite{rame2021ishr} &81.3  $\pm$  0.3 &76.2  $\pm$  0.3 &60.9  $\pm$  0.3 &\underline{42.6  $\pm$  1.0} &34.2  $\pm$  0.3 &59.0 &2 &4 \\
SagNet \cite{nam2021reducing} &81.7  $\pm$  0.6 &75.4  $\pm$  0.8 &\underline{62.5  $\pm$  0.3} &40.6  $\pm$  1.5 &35.3  $\pm$  0.1 &59.1 &1 &3 \\
SelfReg \cite{kim2021selfreg} &81.8  $\pm$  0.3 &76.4  $\pm$  0.7 &62.4  $\pm$  0.1 &41.3  $\pm$  0.3 &34.7  $\pm$  0.2 &59.3 &2 &4 \\
Fish \cite{shi2021gradient} &82.0  $\pm$  0.3 &\textbf{76.9  $\pm$  0.2} &62.0  $\pm$  0.6 &40.2  $\pm$  0.6 &35.5  $\pm$  0.0 &59.3 &3 &5\\
CORAL \cite{sun2016deep} &81.7  $\pm$  0.0 &75.5  $\pm$  0.4 &62.4  $\pm$  0.4 &41.4  $\pm$  1.8 &\underline{36.1  $\pm$  0.2} &59.4 &2 &4 \\
SD \cite{pezeshki2021gradient} &81.9  $\pm$  0.3 &75.5  $\pm$  0.4 &\textbf{62.9  $\pm$  0.2} &42.0  $\pm$  1.0 &\textbf{36.3  $\pm$  0.2} &\underline{59.7} &4 &4\\
Ours &\underline{82.4  $\pm$  0.4} &\underline{76.5  $\pm$  0.4} &62.2  $\pm$  0.1 &\textbf{43.2  $\pm$  1.3} &34.9  $\pm$  0.1 &\textbf{59.8} &4 &4 \\
\bottomrule
\end{tabular}}
\label{tab results}
\vspace{-0.2 cm}
\end{table*}

\subsection{Indentifiability}
A general causal model would require identifiable for latent variables $X_o$ from observed image-label pairs $(x,y)$, which is known to be an ill-posed problem given only limited observed data. In our case, the causal feature $X_o$ is assumed invariant for the same object in different domains, and vice versa for the non-causal feature $X_d$, as it corresponds to the domain-specific information. Hence, the change of domain can be viewed as an unknown intervention that alters the domain-specific information of an object, but the semantic information, which is required for correct classification, remains unchanged. Our augmentation strategy is consistent with this setting, which further aligns with the assumption in K{\"u}gelgen \etal \cite{von2021self}. 
Thus according to Theorem 4.4 in \cite{von2021self}, the identifiability of $X_o$ can be well established if sufficient augmentations are applied and $F_o$ constrained with a consistency regularization (\ie $\mathcal{L}_{cons}$ in Eq.~\eqref{eq contrast}). Note the self-supervised entropy maximization task in \cite{von2021self} is replaced by the adopted supervised objective $\mathcal{L}_{cls}$, which does not diminish the identifiability, because the label information $\{y\}$ can also help purify $X_o$ as $X_o \not\Perp y$ \cite{mahajan2021domain}.

\section{Experiments}

\subsection{Evaluation on DomainBed}
\subsubsection{Datasets and Details}
\label{sec details}
\noindent\textbf{Datasets.} To evaluate the effectiveness of our method, we conduct extensive experiments on five DG benchmarks, namely PACS \cite{li2017deeper}, VLCS \cite{fang2013unbiased}, OfficeHome \cite{venkateswara2017deep}, TerraInc \cite{beery2018recognition}, and DomainNet~\cite{peng2019moment}. Specifically, PACS consists of 9,991 images which can be divided into 7 classes. This dataset is the most commonly used DG benchmark due to its large distributional shift across 4
domains including art painting, cartoon, photo, and sketch;
VLCS collects a total of 10,729 images from 4 different datasets (\ie PASCAL VOC 2007 \cite{everingham2010pascal}, LabelMe \cite{russell2008labelme}, Caltech \cite{fei2004learning}, and Sun \cite{xiao2010sun}) which can be categorized into 5 classes. Images from different datasets are taken under different views and considered different domains;
OfficeHome is an object recognition dataset that consists of 15,588 images from 65 classes in office and home environments, and these images can be divided into 4 domains including artistic, clipart, product, and real world;
TerraInc contains 24,788 images of wild animals from 10 categories. Those images are taken from 4 different locations (\ie L100, L38, L43, L46), and the locations are regarded as the varying domains in the setting;
DomainNet contains 586,575 images from a total of 345 classes whose domains can be depicted in 6 styles, including clipart, infograph, painting, quickdraw, real, and sketch.

\begin{table*}
\centering
\caption{{Average accuracies on the DomainBed \cite{gulrajani2020search} benchmark using the ResNet50 \cite{he2016deep} backbones. The top part uses the default settings in \cite{gulrajani2020search} with results borrowed from \cite{chen2023domain} (except for CBDG, which is from the original paper), and the bottom part uses settings from SWAD~\cite{cha2021swad} with results borrowed from \cite{cha2021swad}.}} 
\scalebox{1}{
\begin{tabular}{lC{1.7cm}C{1.7cm}C{1.7cm}C{1.7cm}C{1.7cm}C{0.9cm}}
\toprule 
& PACS & VLCS & OfficeHome & TerraInc & DomainNet & Avg.\\
\hline \hline
$\text{ERM}$ \cite{vapnik1999nature} &83.1 $\pm$ 0.9 &77.7 $\pm$ 0.8 &65.8 $\pm$ 0.3 &46.5 $\pm$ 0.9 &40.8 $\pm$ 0.2 &62.8\\
$\text{Fish}$ \cite{shi2021gradient} &84.0 $\pm$ 0.3 &78.6 $\pm$ 0.1 &67.9 $\pm$ 0.5 &46.6 $\pm$ 0.4 &40.6 $\pm$ 0.2 &63.5\\
$\text{CORAL}$ \cite{sun2016deep} &85.0 $\pm$ 0.4 &77.9 $\pm$ 0.2 &68.8 $\pm$ 0.3 &46.1 $\pm$ 1.2 &41.4 $\pm$ 0.0 &63.9\\
$\text{SD}$ \cite{pezeshki2021gradient}  &84.4 $\pm$ 0.2 &77.6 $\pm$ 0.4 &68.9 $\pm$ 0.2 &46.4 $\pm$ 2.0 &42.0 $\pm$ 0.2 &63.9\\
$\text{CBDG}$ \cite{wang2022causal}  &86.1 $\pm$ 0.4 &76.1 $\pm$ 0.3 &67.1 $\pm$ 0.4 &48.8 $\pm$ 1.7 &42.6 $\pm$ 1.0 &64.1\\
$\text{RIDG}$ \cite{chen2023domain}  &84.7 $\pm$ 0.2 &77.8 $\pm$ 0.4 &68.6 $\pm$ 0.2 &47.8 $\pm$ 1.1 &41.9 $\pm$ 0.3 &64.2\\
$\text{Ours}$ &85.6 $\pm$ 0.3 & 78.4 $\pm$ 0.1 &68.5 $\pm$ 0.4 &47.3 $\pm$ 1.6 &40.5 $\pm$ 0.1 &64.1\\
\hline
Settings in SWAD~\cite{cha2021swad}\\
\hline
ERM &84.2 $\pm$ 0.1 &77.3 $\pm$ 0.1 &67.6 $\pm$ 0.2 &47.8 $\pm$ 0.6 &44.0 $\pm$ 0.1 &64.2\\
CORAL &86.2 $\pm$ 0.3 &78.8 $\pm$ 0.6 &68.7 $\pm$ 0.3 &47.6 $\pm$ 1.0 &41.5 $\pm$ 0.1 &64.5\\
Ours &86.5 $\pm$ 0.4 &78.4 $\pm$ 0.2 &69.7 $\pm$ 0.5 &48.2 $\pm$ 0.8 &43.3 $\pm$ 0.2 &65.2\\
ERM+SWAD &88.1 $\pm$ 0.1 &79.1 $\pm$ 0.1 &70.6 $\pm$ 0.2 &50.0 $\pm$ 0.3 &46.5 $\pm$ 0.1 &66.9\\
CORAL+SWAD &88.3 $\pm$ 0.1 &78.9 $\pm$ 0.1 &71.3 $\pm$ 0.1 &51.0 $\pm$ 0.1 &46.8 $\pm$ 0.0 &67.3\\
Ours+SWAD &89.0 $\pm$ 0.4 &79.5 $\pm$ 0.1 &71.1 $\pm$ 0.2 &52.8 $\pm$ 0.5 &46.3 $\pm$ 0.0 &67.7\\
\bottomrule
\end{tabular}}
\label{tab res50}
\vspace{-0.5 cm}
\end{table*}

\noindent\textbf{Implementation and evaluation details.}
We use both the ResNet18 \cite{he2016deep} (ImageNet pretrained) and Resnet50 for evaluations. The hyper-parameters in our model include $\alpha$ and $\beta$ in Eq.~(\ref{eq allloss}). We use a dynamic range of $\alpha \in \left[0.1, 1 \right]$ and $\beta \in \left[0.03, 0.3\right]$ for our experiments. Settings for all the compared methods are set according to \cite{gulrajani2020search}. The leave-one-out strategy is used for the training and evaluation process.
During training, we split the examples from training domains to 8:2 (train:val). The learning rates, augmentation strategies, random seeds, and batch sizes are all dynamically set in reasonable ranges, and the iteration step is fixed at 5,000 for each training.
During the evaluation, we select the model that performs the best in the validation samples and test it on the whole held-out domain, which is also referred to as the "training-domain validate set" model selection method in \cite{gulrajani2020search}. We evaluate all the methods $3 \times 20$ times in each target domain: a total of 3 trials and each with 20 randomly selected hyper-parameters unless otherwise mentioned. For each domain in different datasets, the final performance is the average accuracy computed from the 3 trials that are with the best performing hyper-parameter setting, and the performance in this dataset is the average of the corresponding domains. To ensure fair comparisons, all the methods are evaluated in the same device (8 Nvidia Tesla v100 GPUs with 32G memory). 

\begin{table*}[t]
    \centering
    \caption{{Evluations on four challenging datasets from the Wilds benchmark~\cite{koh2021wilds}. Results are directly cited from the public leaderboard and the original paper. Metrics of means and standard deviations are reported across different trials (according to the default settings in \cite{koh2021wilds}). The best results are colored as \red{red}.}}
    \scalebox{1}{
    \begin{tabular}{lC{1.8cm}C{1.8cm}C{1.8cm}C{1.8cm}C{1.8cm}C{1.8cm}}
\toprule 
& \multicolumn{2}{c}{iWildCam}\hspace{0pt} & Camelyon17\hspace{0pt} & RxRX1 & \multicolumn{2}{c}{FMoW} \\
  \cmidrule(lr){2-3} \cmidrule(lr){4-4} \cmidrule(lr){5-5} \cmidrule(lr){6-7}
    Method &Avg. acc. &Macro F1 &Avg. acc. & Avg. acc. &Worst acc.  & Avg. acc. \\
    \hline \hline
    ERM \cite{vapnik1999nature} & 71.6 $\pm$ 2.5 &31.0 $\pm$ 1.3 &70.3 $\pm$ 6.4 &29.9 $\pm$ 0.4 &32.3 $\pm$ 1.25 &53.0 $\pm$ 0.55\\
    CORAL~\cite{sun2016deep} &\red{73.3 $\pm$ 4.3} &\red{32.8 $\pm$ 0.1} &59.5 $\pm$ 7.7 &28.4 $\pm$ 0.3 &31.7 $\pm$ 1.24 &50.5 $\pm$ 0.36\\
    GroupGRO \cite{sagawa2019distributionally} & 72.7 $\pm$ 2.1 &23.9 $\pm$ 2.0 &68.4 $\pm$ 7.3 &23.0 $\pm$ 0.3 &30.8 $\pm$ 0.81 &52.1 $\pm$ 0.50\\
    IRM~\cite{arjovsky2019invariant} & 59.8 $\pm$ 3.7 &15.1 $\pm$ 4.9 &64.2 $\pm$ 8.1 &8.2 $\pm$ 1.1 &30.0 $\pm$ 1.37 &50.8 $\pm$ 0.13\\
    ARM-BN \cite{zhang2020adaptive} & 70.3 $\pm$ 2.4 &23.7 $\pm$ 2.7 &87.2 $\pm$ 0.9 &\red{31.2 $\pm$ 0.1} &24.6 $\pm$ 0.04 &42.0 $\pm$ 0.21\\
    TTBNA~\cite{schneider2020improving} & 46.6 $\pm$ 0.9 &13.8 $\pm$ 0.6 &- &20.1 $\pm$ 0.2 &30.0 $\pm$ 0.23 &51.5 $\pm$ 0.25\\
    Fish~\cite{shi2021gradient} & 64.7 $\pm$ 2.6 &22.0 $\pm$ 1.8 &74.7 $\pm$ 7.1 &- &34.6 $\pm$ 0.18 &51.8 $\pm$ 0.32\\
    CGD~\cite{piratla2021focus} &- &- &69.4 $\pm$ 7.9 &- &32.0 $\pm$ 2.26 &50.6 $\pm$ 1.39\\
    LISA~\cite{yao2022improving} & - &- &77.1 $\pm$ 6.9 &31.9 $\pm$ 1.0 &35.5 $\pm$ 0.81 &52.8 $\pm$ 1.15\\
    RIDG~\cite{chen2023domain} & 70.9 $\pm$ 2.8 &30.7 $\pm$ 1.0 &90.6 $\pm$ 2.9 &30.0 $\pm$ 0.3 &\red{36.1 $\pm$ 1.48} &\red{55.9 $\pm$ 0.25}\\
    Ours & 71.8 $\pm$ 2.0 &31.5 $\pm$ 1.4 &\red{91.0 $\pm$ 4.4} &30.4 $\pm$ 0.1 &{35.0 $\pm$ 0.73} &{53.9 $\pm$ 0.40}\\
    \bottomrule
    \end{tabular}}
    \label{tab wilds}
\end{table*}

\subsubsection{Experimental Results}
\label{sec domainbedresults}
\textbf{Results with ResNet18.}
We report the average accuracy (\ie Avg.), leading performance (\ie Top5), and improvements regarding the baseline (\ie Score) for all compared methods.
Results are shown in Table~\ref{tab results}. We observe that the naive ERM method obtains comparable results against existing arts. In fact, as a strong baseline, ERM is ranked 11th place in the term of average accuracy, leading half of the compared model, and less than half of the state-of-the-art methods can outperform ERM in most datasets (\ie Score $\geq$ 3). These observations are in keeping with the findings in~\cite{gulrajani2020search}. Meanwhile, we note that when using sophisticated designs to learn domain-invariant features on the basis of ERM, the average performances seem to decrease in most cases for IRM \cite{arjovsky2019invariant}, MMD \cite{li2018domain}, DANN \cite{ganin2016domain}, CDANN \cite{li2018deep}, and RSC \cite{huang2020self}, indicating the ineffectiveness of current strategies that are specially designed for maintaining invariance. 

In contrast, our causal-inspired method outperforms ERM in most datasets. Specifically, it obtains the top 5 performances in 4 out the 5 datasets evaluated and is ranked 1st place in the term of average accuracy of all 5 benchmarks. When compared with alternatives that either use augmentations (\ie Mixup \cite{yan2020improve}, SagNet \cite{nam2021reducing}, SelfReg \cite{kim2021selfreg}, MixStyle \cite{zhou2021domain}, and CAD \cite{ruan2021optimal}) or gradient-level constraints (\ie, Fish \cite{shi2021gradient}, SD \cite{pezeshki2021gradient}, and Fishr \cite{rame2021ishr}), our method consistently shows comparable or even better performance in most datasets. These results validate the effectiveness of our complete framework. 
{There is only one dataset where our method fails in competing with other leading algorithms. We note that in the DomainNet dataset, similar to those methods involving data augmentations, our method cannot outperform the baseline ERM. This may indicate that this dataset is insensitive to existing augmentation skills, and a more effective customized alternative may be required. We leave this open problem to our future research.}
We present results of average accuracy in each domain from different datasets in the supplementary material. Please refer to it for details.

\noindent{\textbf{Results with ResNet50.}
To comprehensively evaluate the proposed method, we also test it with a larger ResNet50 backbone. In this setting,  our method is compared with the baseline and some leading methods from Table~\ref{tab results}. Some recent state-of-the-art methods (\ie RIDG~\cite{chen2023domain} and CBDG~\cite{wang2022causal}) are introduced for comparison as well. As larger networks require more resources, each trail will correspond to 5 hyper-parameter searches in this setting.     
As can be observed in Table~\ref{tab res50}, our method obtains competitive results against recent methods~\cite{chen2023domain,wang2022causal}. Meanwhile, our method also improves the baseline in most adopted datasets. This improvement indicates its effectiveness in generalization when the backbone size increases.}

{We also reevaluate our method following the settings in SWAD~\cite{cha2021swad} (i.e. implemented with smaller hyper-parameter searching space) and list the results in Table~\ref{tab res50}. Similar to other methods, our method is not as effective as SWAD due to a lack of seeking flat minima. On the other hand, since SWAD can be integrated with other methods to improve the performances, we further evaluate our method by combining it with SWAD (i.e. Ours+SWAD). The results in Table~\ref{tab res50} indicate that when combined with SWAD, our method obtains competitive performance against others with the same strategy.
}

\subsection{Evaluation on Wilds}
\label{sec wilds}
\subsubsection{Datasets and Details}
{\textbf{Datasets.} Wilds contain multiple datasets displaying real-world distribution shifts. Following RIDG~\cite{chen2023domain}, we adopt 4 challenging datasets for evaluation. The dataset details are listed as follows: \textbf{(1) iWildCam~\cite{beery2021iwildcam}} contains 203,029 animal images from a total of 182 species, and there are 324 domains in this dataset. These domains are trapped by the camera in different locations; \textbf{(2) Camelyon17~\cite{bandi2018detection}} consists of 45,000 images used for the binary tumor classification task. The 5 hospitals that collect these images are regarded as domains; \textbf{(3) RxRx1~\cite{taylor2019rxrx1}} contains 125,514 high-resolution fluorescence microscopy images of human cells under 1,108 genetic perturbations, which comprise a total of 51 experimental batches (i.e. domains); \textbf{(4) FMoW~\cite{christie2018functional}} is a satellite dataset contains 118,886 samples used
for land classification across 62 regions and 80 years, where the regions and years are regarded as category and domain, respectively.}

\noindent{\textbf{Implementation and evaluation details.} Following the default practice in~\cite{koh2021wilds}, we use different settings for different datasets during implementation. Specifically, in iWildCam and RxRx1 datasets, the imagenet pretrained ResNet50 backbone is utilized for a total of 3 trials. We report average accuracy (i.e. avg. acc.) for the two datasets and Macro F1 for iWildCam. For Camelyon17 and FMoW, the imagenet pretrained DenseNet121~\cite{huang2017densely} is adopted for evaluation. In Camelyon17, we conduct 10 trials to compute the average accuracy. In FMoW, we report both the worst-case and average accuracies, which are computed over 3 trials.}  

\subsubsection{Experimental Results}
{Table~\ref{tab wilds} shows our experimental results. We use the statistics listed in the public leaderboard\footnote{https://wilds.stanford.edu/leaderboard/}  (i.e. CORAL~\cite{sun2016deep},
GroupGRO \cite{sagawa2019distributionally},
IRM~\cite{arjovsky2019invariant},
ARM-BN \cite{zhang2020adaptive},
TTBNA~\cite{schneider2020improving},
Fish~\cite{shi2021gradient},
LISA~\cite{yao2022improving}) and the original paper (i.e. RIDG~\cite{chen2023domain}) for the method comparison.
Similar to the observation in the Domainbed benchmark, we note that most current methods perform on par with the baseline ERM method, indicating that when confronting difficult generalization tasks, these empirical designs may not be as effective as the simplest end-to-end training strategy. 
Differently, our method is guided by the theoretical guidance from the causal relations and can help to improve the baseline in almost all datasets. Moreover, compared to the previous leading art RIDG~\cite{chen2023domain}, our algorithm can obtain better performance in 3 out of the 4 datasets. These results validate its effectiveness in the challenging Wilds benchmark.}

\begin{table}[t]
\centering
\caption{Evaluations of ERM and the proposed basic framework with different settings of $\theta_b$ in the unseen domain from \cite{li2017deeper} in ResNet backbones. Here $bck_{1,2,3,4}$ is the standard four blocks in the original implementation with $bck_4$ close to the classifier. The common practice of using a shared base feature extractor with two lightweight prediction heads is detrimental to the performance, and the proposed early-branching structure leads to better results.}
\scalebox{0.85}{
\begin{tabular}{c|ccccc}
\toprule 
& Art & Cartoon & Photo & Sketch &Avg.\\
\hline \hline
\multicolumn{6}{c}{~~~~~~~~~~~~~~~~~~ResNet18 backbone}\\
\hline
ERM &78.0 $\pm$ 1.3 &73.4 $\pm$ 0.8 &94.1 $\pm$ 0.4 &73.6 $\pm$ 2.2 &79.8 $\pm$ 0.4\\
\hline
$\theta_b=\textbf{I}$ &79.1 $\pm$ 0.7 &74.1 $\pm$ 1.1 &94.6 $\pm$ 0.4 &74.8 $\pm$ 0.6 &80.7 $\pm$ 0.5\\
$\theta_b= bck_{1}$  &79.3 $\pm$ 0.9 &74.1 $\pm$ 2.0 &94.8 $\pm$ 0.5 &75.8 $\pm$ 1.6 &81.0 $\pm$ 0.6\\
$\theta_b= bck_{2}$ &79.1 $\pm$ 1.2 &73.2 $\pm$ 1.8 &94.8 $\pm$ 0.6 &74.9 $\pm$ 1.3 &80.5 $\pm$ 0.8\\
$\theta_b= bck_{3}$ &78.3 $\pm$ 0.2 &72.8 $\pm$ 0.6 &94.9 $\pm$ 0.9 &74.1 $\pm$ 0.9 &80.0 $\pm$ 0.2\\
$\theta_b= bck_{4}$ &75.3 $\pm$ 0.7 &72.1 $\pm$ 0.2 &94.3 $\pm$ 0.2 &71.5 $\pm$ 0.6 &78.3 $\pm$ 0.3\\
\hline
\multicolumn{6}{c}{~~~~~~~~~~~~~~~~~~ResNet50 backbone}\\
\hline
ERM &85.1 $\pm$ 1.3 &78.0 $\pm$ 1.2 &97.3 $\pm$ 0.1 &72.2 $\pm$ 1.6 &83.1 $\pm$ 0.9\\
$\theta_b=\textbf{I}$ &85.8 $\pm$ 0.9 &77.6 $\pm$ 0.1 &97.2 $\pm$ 0.2 &77.7 $\pm$ 0.4 &84.5 $\pm$ 0.3\\
$\theta_b= bck_{1}$  &85.6 $\pm$ 1.0 &77.9 $\pm$ 1.7 &97.5 $\pm$ 0.4 &77.7 $\pm$ 0.1 &84.7 $\pm$ 0.5\\
$\theta_b= bck_{2}$ &84.2 $\pm$ 0.7 &77.7 $\pm$ 1.1 &96.3 $\pm$ 0.2 &73.5 $\pm$ 0.5 &82.9 $\pm$ 0.4\\
$\theta_b= bck_{3}$ &81.2 $\pm$ 1.2 &77.1 $\pm$ 0.6 &97.1 $\pm$ 0.1 &72.6 $\pm$ 0.7 &82.0 $\pm$ 0.5\\
$\theta_b= bck_{4}$ &85.4 $\pm$ 1.6 &78.0 $\pm$ 0.8 &97.1 $\pm$ 0.2 &66.4 $\pm$ 2.4 &81.7 $\pm$ 0.7\\
\bottomrule
\end{tabular}}
\label{tab base_results}
\end{table}

\begin{table*}[t]
\centering
\caption{Evaluations of the early-branching structure with a lightweight four layer network in the unseen domain from RMNIST \cite{ghifary2015domain}. Here $bck_{1,2,3,4}$ is the four convolution layers with $bck_4$ close to the classifier. The proposed framework achieves the optimal performances with an early-branching network structure (\ie $\theta_b=bck_1$ or $\theta_b=\textbf{I}$), similar as that observed in the ResNet backbones.}
\scalebox{1}{
\begin{tabular}{c|ccccccc}
\toprule 
& 0 & 15 & 30 & 45 &60 &75 &avg\\
\hline \hline
ERM &95.7 $\pm$ 0.2 &98.6 $\pm$ 0.1 &98.9 $\pm$ 0.1 &99.0 $\pm$ 0.1 &98.8 $\pm$ 0.0 &96.0 $\pm$ 0.1 &97.9 $\pm$ 0.1\\
\hline
$\theta_b=\textbf{I}$ &95.8 $\pm$ 0.0 &98.1 $\pm$ 0.1 &98.8 $\pm$ 0.0 &99.4 $\pm$ 0.1 &99.0 $\pm$ 0.0 &96.6 $\pm$ 0.2 &98.0 $\pm$ 0.1\\
$\theta_b= bck_{1}$   &96.2 $\pm$ 1.0 &98.4 $\pm$ 0.1 &99.1 $\pm$ 0.1 &98.9 $\pm$ 0.1 &99.0 $\pm$ 0.1 &96.3 $\pm$ 0.5 &98.0 $\pm$ 0.1\\
$\theta_b= bck_{2}$ &91.4 $\pm$ 1.0 &98.2 $\pm$ 0.0 &98.6 $\pm$ 0.1 &99.0 $\pm$ 0.1 &98.8 $\pm$ 0.1 &95.0 $\pm$ 0.1 &96.8 $\pm$ 0.1\\
$\theta_b= bck_{3}$ &88.7 $\pm$ 0.3 &97.5 $\pm$ 0.1 &98.5 $\pm$ 0.0 &98.9 $\pm$ 0.0 &98.6 $\pm$ 0.1 &94.3 $\pm$ 0.1 &96.1 $\pm$ 0.1\\
$\theta_b= bck_{4}$ &90.4 $\pm$ 0.6 &97.0 $\pm$ 0.3 &98.4 $\pm$ 0.0 &98.5 $\pm$ 0.1 &98.6 $\pm$ 0.1 &92.8 $\pm$ 0.1 &95.9 $\pm$ 0.2\\
\bottomrule
\end{tabular}}
\label{tab rmnist}
\end{table*}

\begin{table}[t]
\centering
\caption{Evaluations of ERM and the proposed basic framework with different settings of $\theta_b$ in the unseen domain from \cite{li2017deeper} with vision trasformer backbones (\ie DeiT \cite{touvron2021training} and T2T \cite{yuan2021tokens}). Here $bck_{1,2,3,4}$ is the 12 attention layers in them, with each block corresponding to 3 attention layers and $bck_4$ close to the classifier. Similar as the observation in the ResNet backbone, the common practice of using a shared base feature extractor with two lightweight prediction heads is detrimental to the performance, and the proposed early-branching structure leads to better results in vision transformers.}
\scalebox{0.85}{
\begin{tabular}{c|ccccc}
\toprule 
& Art & Cartoon & Photo & Sketch &Avg.\\
\hline \hline
\multicolumn{6}{c}{~~~~~~~~~~~~~~~~~~Deit backbone}\\
\hline
ERM-ViT &87.3 $\pm$ 0.4 &81.5 $\pm$ 0.3 &98.7 $\pm$ 0.4 &74.5 $\pm$ 1.9 &85.5 $\pm$ 0.6\\
$\theta_b=\textbf{I}$ &87.7 $\pm$ 0.6 &81.2 $\pm$ 0.4 &98.3 $\pm$ 0.2 &76.0 $\pm$ 2.5 &85.8 $\pm$ 0.4\\
$\theta_b= bck_{1}$   &88.4 $\pm$ 0.8 &80.5 $\pm$ 1.4 &98.8 $\pm$ 0.3 &77.8 $\pm$ 0.9 & 86.4 $\pm$ 0.2\\
$\theta_b= bck_{2}$ &86.9 $\pm$ 0.7 & 80.6 $\pm$ 0.9 &98.6 $\pm$ 0.0  &74.6 $\pm$ 1.2  &85.2 $\pm$ 0.4\\
$\theta_b= bck_{3}$ &86.5 $\pm$ 0.2 & 78.0 $\pm$ 0.8 & 98.4 $\pm$ 0.3 & 77.3 $\pm$ 0.3  &85.1 $\pm$ 0.3\\
$\theta_b= bck_{4}$ &83.2 $\pm$ 2.0 & 77.8 $\pm$ 1.1  & 97.9 $\pm$ 0.4 & 75.5 $\pm$ 0.5 &83.6 $\pm$ 0.5\\
\hline
\multicolumn{6}{c}{~~~~~~~~~~~~~~~~~~T2T backbone}\\
\hline
ERM-ViT &83.5 $\pm$ 0.2 &75.8 $\pm$ 0.7 &97.7 $\pm$ 0.3 &76.9 $\pm$ 0.8 &83.5 $\pm$ 0.3\\
$\theta_b=\textbf{I}$ &86.0 $\pm$ 0.9 &76.0 $\pm$ 2.2 &98.4 $\pm$ 0.1 &75.7 $\pm$ 1.8 &84.0 $\pm$ 1.0\\
$\theta_b= bck_{1}$   &85.1 $\pm$ 0.9 &75.6 $\pm$ 1.4 &98.4 $\pm$ 0.1 &77.3 $\pm$ 0.5 &84.1 $\pm$ 0.5\\
$\theta_b= bck_{2}$ &82.8 $\pm$ 0.9 &77.4 $\pm$ 1.6 &97.9 $\pm$ 0.5 &74.7 $\pm$ 1.5 &83.2 $\pm$ 0.3\\
$\theta_b= bck_{3}$ &81.6 $\pm$ 0.4 &78.4 $\pm$ 1.5 &97.5 $\pm$ 0.6 &75.4 $\pm$ 1.3 &83.2 $\pm$ 0.5\\
$\theta_b= bck_{4}$ &82.6 $\pm$ 0.6 &73.6 $\pm$ 0.8  &98.0 $\pm$ 0.2 &74.2 $\pm$ 1.2 &82.1 $\pm$ 0.3\\
\bottomrule
\end{tabular}}
\label{tab deit}
\end{table}

\begin{table*}[t]
\centering
\caption{Ablation studies regarding the effectivenesses of the proposed RDS augmentation strategy, the adopted losses, the adopted MLP layer for computing $\mathcal{L}_{cons}$, and the adopted HSIC independent constraint. Experiments are conducted in PACS \cite{li2017deeper} with the leave-one-out training-test strategy. Here RDS$^{-}$ is our RDS without Eq.~(\ref{eq sample}); $\mathcal{L}_{indp}^{oth}$ and $\mathcal{L}_{indp}^{corr}$ are variants of our method that use orthogonal and correlation as independent measurements.}
\scalebox{1}{
\begin{tabular}{llccccc}
\toprule 
& model & art & cartoon & photo & sketch &avg \\
\hline \hline
& Ours & 81.3  $\pm$  1.4 &75.7  $\pm$  0.2 &94.6  $\pm$  0.3 &78.1  $\pm$  0.7 &82.4  $\pm$  0.4 \\
\hline
\multirow{4}{*}{Augmentations} 
& Ours w/o augmentation &79.3  $\pm$  0.9 &74.1  $\pm$  2.0 &94.8  $\pm$  0.5 &75.8  $\pm$  1.6 &81.0  $\pm$  0.6\\
& Ours w/ MixStyle \cite{zhou2021domain} &80.8  $\pm$  0.8 &75.1  $\pm$  0.3 &94.4  $\pm$  0.5 &77.4  $\pm$  1.5 &81.9  $\pm$  0.3\\
&Ours w/ DSU \cite{li2022uncertainty} &81.3  $\pm$  0.5 &75.1  $\pm$  0.8 &94.2  $\pm$  0.4 &76.3  $\pm$  0.5 &81.7  $\pm$  0.3 \\
& Ours w/ RDS$^{-}$ &80.7  $\pm$  0.7 &75.4  $\pm$  0.5 &94.6  $\pm$  0.9 &76.6  $\pm$  1.7 &81.9  $\pm$  0.7\\
\hline
\multirow{3}{*}{Losses} 
&Ours w/o $\mathcal{L}_{indp}$ \& $\mathcal{L}_{cons}$ &79.9  $\pm$  0.7 &75.6  $\pm$  0.4 &94.2  $\pm$  0.2 &76.2  $\pm$  0.8 &81.5  $\pm$  0.3\\
&Ours w/o $\mathcal{L}_{indp}$ &80.5  $\pm$  0.5 &76.2  $\pm$  0.4 &94.0  $\pm$  0.2 &76.4  $\pm$  0.2 &81.8  $\pm$  0.1\\
&Ours w/o $\mathcal{L}_{cons}$ &79.3  $\pm$  0.4 &75.4  $\pm$  0.9 &94.5  $\pm$  0.2 &78.3  $\pm$  1.7 &81.9  $\pm$  0.5 \\
\hline
MLP layer
&Ours w/o MLP in $\mathcal{L}_{cons}$ &80.5 $\pm$ 0.4 &75.1 $\pm$ 0.6 &94.2 $\pm$ 0.3 &77.7 $\pm$ 1.3 &81.9 $\pm$ 0.4\\
\hline
\multirow{2}{*}{Measurements} 
&Ours w/ $\mathcal{L}_{indp}^{oth}$ &80.6  $\pm$  1.2 &75.4  $\pm$  0.5 &94.4  $\pm$  0.1 &77.9  $\pm$  1.4 &82.1  $\pm$  0.3\\
&Ours w/ $\mathcal{L}_{indp}^{corr}$ &81.2  $\pm$  0.6 &75.8  $\pm$  0.2 &94.2  $\pm$  0.6 &77.7  $\pm$  0.5 &82.2  $\pm$  0.3\\
\bottomrule
\end{tabular}}
\label{tab abl}
\end{table*}

\section{Analysis}
\label{sec ana} 
All experiments in this section are conducted on the widely-used PACS \cite{li2017deeper} benchmark with the rigorous evaluation settings illustrated in Sec. \ref{sec details}.

\subsection{Early-Branching Structure for Implementation}
\label{sec early}
When designing the dual-branch network (\ie $\theta_b \cup \theta_o \cup \theta_d$), the common practice is to use a deep sub-network as the shared base feature extractor and two lightweight prediction heads with much fewer model parameters \cite{ganin2016domain,albuquerque2019generalizing,atzmon2020causal,chen2021style}. According to proposition~\ref{prop}, this design will be detrimental to the performance because the causal feature extractor $F_o$ will rely heavily on the non-causal features $X_d$. We thus suggest an early-branching structure for the dual-branch network. To examine if the proposed early-branching structure can help improve DG, we conduct experiments by testing the performance with different branching locations.

We examine the performances with the classical ResNet18 and ResNet50 \cite{he2016deep} backbones that have four blocks. We try to alter the model architecture by choosing block $k\in\{1,2,3,4\}$ as the shared part. As for the corresponding two branches, we use $4-k$ blocks plus an adaptive pooling layer and a fully connected layer for feature generation (classifiers are attached at the end of each branch for final predictions). We also consider a special case when no shared base feature extractor is used, which is equivalent to using two separate networks for $F_o$ and $F_d$. We denote those cases as $\theta_b=\{\textbf{I}, bck_1, bck_{2},bck_{3},bck_{4}\}$ \footnote{In the context, $bck_{1, 2, 3, 4}$ is represented as $bck_{4}$ for simplicity, and similar for $bck_{1, 2, 3}$ (as $bck_{3}$) and $bck_{1, 2}$ (as $bck_{2}$).} and train the respective architecture with the classification loss and independent constraint (augmentation is not used for fair comparisons). Results shown in Table \ref{tab base_results} suggest that when the shared backbone is deep, i.e., $\theta_b = bck_{4}$, the performances are worse than the ERM baseline, and using an earlier branching structure, e.g., $\theta_b = bck_1$ leads to better performances, we thus use this setting for all our experiments.

To further verify our hypothesis, we conduct more evaluations with different network structures. \textbf{(1)} We evaluate the independent constraint on a simple network that consists of 4 convolution layers. Here the domain of $\theta_b$ (\ie $\theta_b=\{\textbf{I}, bck_1, bck_{2},bck_{3},bck_{4}\}$) is similar to that in the experiment with ResNet backbones with $bck_4$ closest to the classification head. We use the Rotated MNIST dataset \cite{ghifary2015domain} (\ie RMNIST) in this experiment. RMNIST is originally designed for the handwritten digit recognition task which consists of 70,000 samples and can be categorized into 6 domains by its rotation degree: $\{0, 15, 30, 45, 60, 75\}$.
\textbf{(2)} We adopt vision transformers to evaluate the early-branching structure. Specifically, we use the DeiT \cite{touvron2021training} and a T2T  \cite{yuan2021tokens} transformers that both with 12 attention layers in our evaluations. Considering the consistency with the aforementioned network structures, we divide the 12 attention layers into 4 blocks and each with 3 attention layers. In these two experiments, the PACS dataset \cite{li2017deeper} is utilized. 
Note in all three evaluations, the experimental settings are the same as that in the benchmark \cite{gulrajani2020search}, and we do not incorporate augmentation within the framework to ensure fair comparisons. 

Results are listed in Table~\ref{tab rmnist} and~\ref{tab deit}. We observe that we can obtain the optimal results on both the smaller (\ie 4 convs) convolution networks and vision transformers (\ie DeiT \cite{touvron2021training} and T2T \cite{yuan2021tokens}) with an early-branching structure, and the results decrease thereafter. All these results comply with our findings in proposition~\ref{prop}, and the proposed early-branching structure can serve as an effective complement to the independent-based framework regardless of different network backbones.

\def\swthree{0.243\linewidth}
\renewcommand{\tabcolsep}{1pt}
\begin{figure*}[t]
\centering
    \begin{tabular}{cccc}
        \includegraphics[width=\swthree]{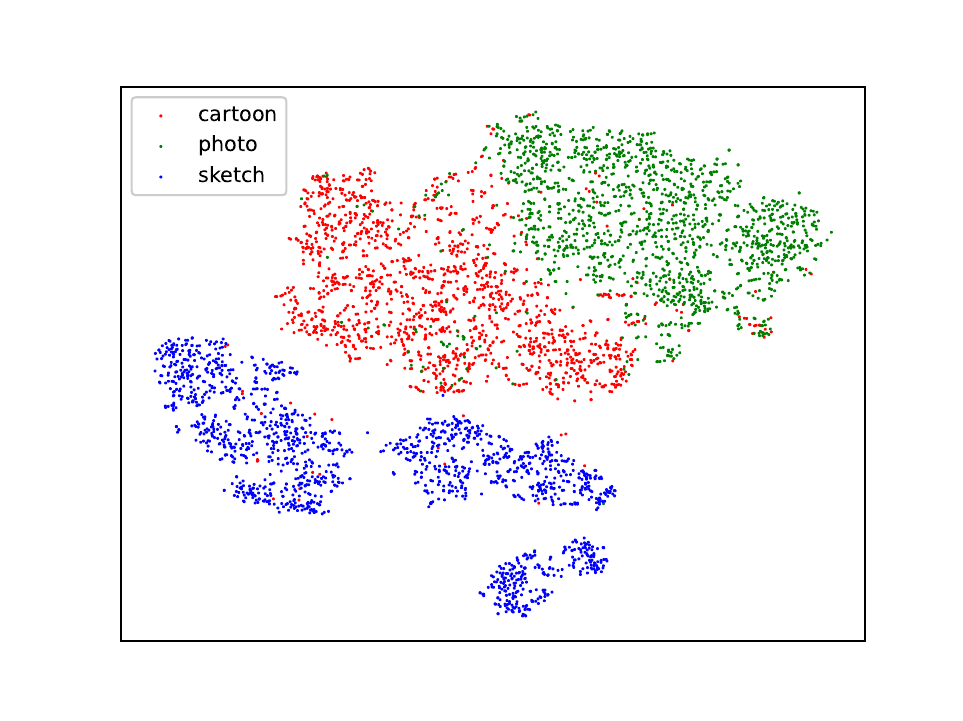}&
        \includegraphics[width=\swthree]{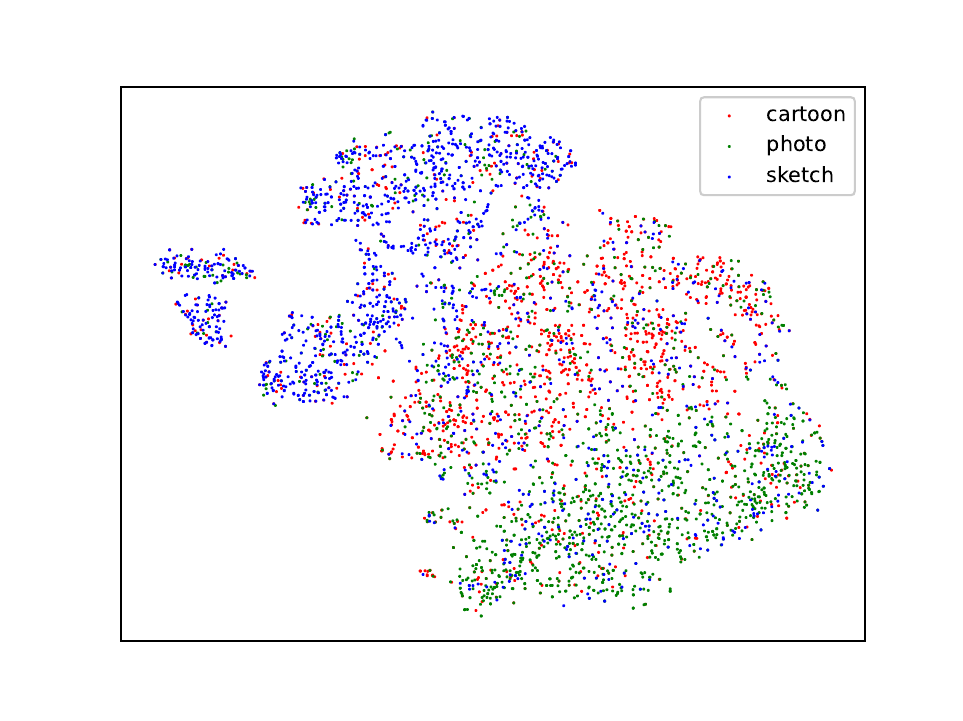}&
        \includegraphics[width=\swthree]{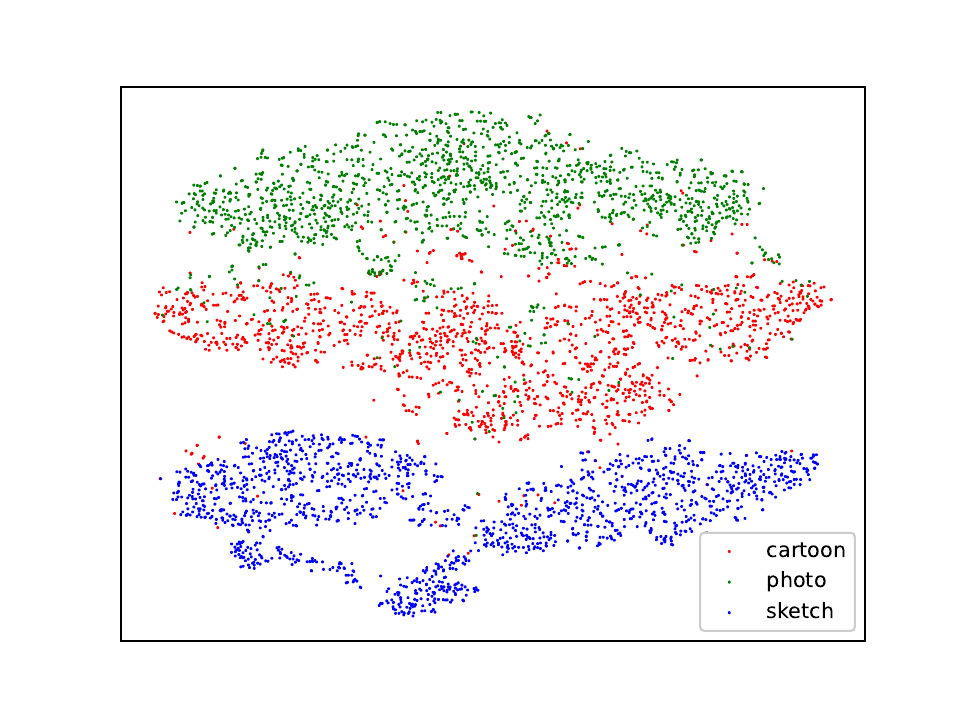}&
        \includegraphics[width=\swthree]{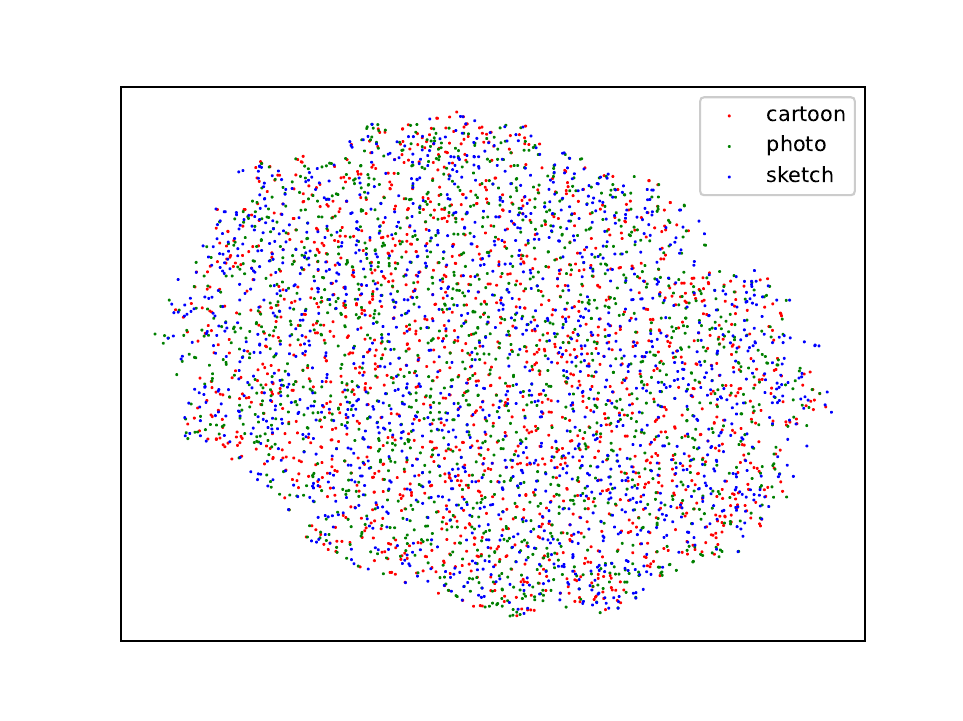}\\
        \footnotesize{\text{(a) ERM}} & \footnotesize{\text{(b) MixStyle}}& \footnotesize{\text{(c) DSU}} & \footnotesize{\text{(d) RDS}}\\
    \end{tabular}
	\caption{Visualizations of the original and augmented style statistics (concatenation of mean and variance). The PACS dataset \cite{li2017deeper} is used with art as the unseen domain during training. Augmented style statistics from MixStyle \cite{zhou2021domain} and DSU \cite{li2022uncertainty} can still reveal domain information, while our RDS can create diverse and inhomogeneous statistics.}
	\label{fig statistics}
\end{figure*}

\subsection{Effectiveness of the RDS Augmentation Strategy}
\label{sec aug}
Our augmentation strategy RDS is designed to alter the domain information of a sample while preserving its semantic information. We first compare our model against the baseline model that does not use any augmentation strategies (\ie Ours w/o augmentation in Table~\ref{tab abl}). 
As shown in the first row in Table~\ref{tab abl}, when enabling augmentation, our method can outperform the baseline model by a large margin in almost all domains except for "photo", which might be due to the ImageNet pretraining \cite{xu2020robust}. Note that the improvement for "sketch" is more pronounced than others, indicating our RDS can generalize to a target domain that significantly deviates from the source domains.

We also compare variants of our method by replacing RDS with other augmentations based on AdaIN (\ie Ours w/ MixStyle \cite{zhou2021domain} and Ours w/ DSU \cite{li2022uncertainty}). As listed in the 3rd and 4th rows, RDS consistently outperforms augmentations from \cite{zhou2021domain} and \cite{li2022uncertainty} on average. We postulate that this is due to RDS's capability of synthesizing more diverse domains thanks to the sampling scheme. 
Moreover, to test if RDS can bring improvements against the naive sampling strategy, we evaluate another variant of our method by disabling the process depicted in Eq.~(\ref{eq sample}) (\ie Ours w/ RDS$^{-}$). Results in the 5th row show that naive sampling is not as effective as the proposed sampling strategy, which is not surprising since styles from the naive sampling may not be diverse enough compared to that from RDS. Note that the improvements are more noticeable than others when it comes to the ``sketch" domain, which might be due to the reason that the rest three domains are close \cite{zhou2021domain}, thus requiring less diverse information for generalizing.
All these results validate the effectiveness of the proposed RDS against existing methods.

\subsection{Further Comparisons Between RDS and Others}
\label{sec moreaug}
This section further compares our RDS augmentation strategy with other two methods that based on AdaIN \cite{huang2017arbitrary} (\ie including MixStyle \cite{zhou2021domain} and DSU \cite{li2022uncertainty}). 
First, for MixStyle, given a batch of training sample $x$, the new style statistics $\hat{\mu}_{mix}$ and $\hat{\sigma}_{mix}$ can be obtained by mixing the statistics of the original sample and their corresponding shuffled one:
\begin{equation}
    \label{eq mixstyle}
    \hat{\mu}_{mix} = \lambda \mu + (1-\lambda) \tilde{\mu},~~\text{s.t.}~~\lambda\in\text{Beta}(\omega, \omega),~~\omega \in (0, \infty),
\end{equation}
where Beta($\cdot$) denotes the Beta distribution, $\mu$ and $\tilde{\mu}$ compute the mean of the original batch and its shuffled version. A similar method is applied to obtain another style statistic $\hat{\sigma}_{mix}$. 
Second, for DSU, their new statistics $\hat{\mu}_{dsu}$ and $\hat{\sigma}_{dsu}$ are obtained by adding perturbations to the original statistics:
\begin{equation}
\label{eq dsu}
\hat{\mu}_{dsu} = \mu + \phi \Sigma_{\mu},~~\text{s.t.}~~\phi \in \mathcal{N}(0,1),
\end{equation}
where $\Sigma_{\mu}$ is the variance of $\mu$ (similar to our definition in Eq.~(\ref{eq style})). The same approach is applied to obtain $\hat{\sigma}_{dsu}$.

To better illustrate the differences between their approaches and RDS, we plot the original and augmented style statistics in Figure~\ref{fig statistics}. As shown in Figure~\ref{fig statistics} (b) and (c), style statistics augmented from MixStyle and DSU can still reveal the domain information, indicating that their augmentations may create homogeneous features. In contrast, statistics from RDS are diverse and do not contain domain information, demonstrating that RDS obtains unlimited inhomogeneous features. These visualizations and the ablation studies in Table~\ref{tab abl} can validate the effectiveness of our RDS against other augmentation strategies.

\renewcommand{\tabcolsep}{7pt}
\begin{table*}[t]
\centering
\caption{Evaluations regarding different hyper-parameter (\ie $\alpha$ and $\beta$ in Eq.~(\ref{eq allloss})) settings. We fix one parameter and tune another when conducting the experiments which are examined in PACS \cite{li2017deeper} with the leave-one-out training-test strategy.}
\scalebox{1}{
\begin{tabular}{llccccc}
\toprule 
\multicolumn{2}{c}{hyper-parameters} & art & cartoon & photo & sketch &avg \\
\hline \hline
\multirow{4}{*}{$\beta=1$} & $\alpha=0.03$ &80.7 $\pm$ 0.7 &75.7 $\pm$ 0.9 &95.1 $\pm$ 0.2 &77.8 $\pm$ 0.7 &82.3 $\pm$ 0.4\\
&$\alpha=0.3$ &81.5 $\pm$ 0.1 &76.3 $\pm$ 1.2 &94.5 $\pm$ 0.2 &78.6 $\pm$ 1.0 &82.7 $\pm$ 0.4\\
&$\alpha=3$ &79.2 $\pm$ 0.8 &76.1 $\pm$ 0.8 &92.9 $\pm$ 0.1 &73.4 $\pm$ 0.8 &80.4 $\pm$ 0.3\\
&$\alpha=30$ &58.6 $\pm$ 6.9 &45.3 $\pm$ 9.0 &35.3 $\pm$ 5.7 &30.3 $\pm$ 4.5 &42.4 $\pm$ 5.9\\
\hline
\multirow{5}{*}{$\alpha=0.3$} &$\beta=0.1$ &82.6 $\pm$ 0.7 &76.3 $\pm$ 0.3 &94.2 $\pm$ 0.4 &74.9 $\pm$ 0.7 &82.0 $\pm$ 0.2 \\
&$\beta=1$ &81.5 $\pm$ 0.1 &76.3 $\pm$ 1.2 &94.5 $\pm$ 0.2 &78.6 $\pm$ 1.0 &82.7 $\pm$ 0.4\\
&$\beta=10$ &80.5 $\pm$ 0.6 &75.5 $\pm$ 0.2 &94.6 $\pm$ 0.3 &76.1 $\pm$ 0.9 &81.7 $\pm$ 0.4\\
&$\beta=100$ &81.4 $\pm$ 0.8 &76.8 $\pm$ 0.7 &93.1 $\pm$ 0.4 &75.9 $\pm$ 0.6 &81.8 $\pm$ 0.2\\
&$\beta=1000$ &53.1 $\pm$ 7.2 &58.7 $\pm$ 8.2 &63.3 $\pm$ 11.9 &62.6 $\pm$ 6.4 &59.4 $\pm$ 8.4\\
\bottomrule
\end{tabular}}
\label{tab params}
\vspace{-0.3cm}
\end{table*}

\subsection{Effectiveness of the Loss Terms}
In our complete algorithm, there are two additional loss terms in addition to the main classification losses: the conditional consistency loss $\mathcal{L}_{cons}$ and the independent constraint loss $\mathcal{L}_{indp}$. We study their effectiveness by disabling them separately in our framework. Results are listed in the 6th, 7th and 8th rows in Table~\ref{tab abl}. We observe that disabling either one of them can lead to performance decreases over our original design, demonstrating that each loss can independently contribute to the good performance of our method.
Meanwhile, we also note that when the two loss terms are both disabled, the corresponding model is inferior to other variants, meaning these two terms are complementary in our framework and can further improve the generalization.

\subsection{Effectiveness of the MLP layer}
We borrow the idea of using an additional MLP layer to prevent representation collapse from \cite{grill2020bootstrap}. Here we conduct an ablation study to examine if it is really necessary. To this end, we compare our method against a variant where the MLP layer is disabled in Eq.~(\ref{eq contrast}) (\ie Ours w/o MLP in $\mathcal{L}_{sen}$). As listed in the 9th row in Table~\ref{tab abl}, integrating the MLP layer can bring improvements in most test domains, validating its effectiveness in improving the DG performance.

\subsection{Effectiveness of the Adopted HSIC Independent Constraint}
\label{sec hsic}
We minimize the HSIC value between the causal and non-causal features (\ie $F_o(x)$ and $F_d(x)$) to reduce their dependency. As there exist other dependence measurements, such as orthogonal and correlation, one may wonder if HSIC is a better choice compared to them. To answer this question, we conduct ablation studies by replacing the HSIC measurement with the other two criteria. The orthogonal constraint requires the multiplication of the two features to be zero (\ie Ours w/ $\mathcal{L}_{indp}^{oth}$); the correlation constraint minimizes the cross-correlation matrix computed between $F_o(x)$ and $F_d(x)$, and we use the implementation from \cite{zbontar2021barlow} for this experiment (\ie Ours w/ $\mathcal{L}_{indp}^{corr}$). Experimental results listed in the 10th and 11th rows in Table~\ref{tab abl} suggest that the adopted HSIC independent constraint is more effective than the other two. We thus use the HSIC criterion in this work.

\subsection{Sensitive to the hyper-parameters.}
\label{sec params}
Our overall algorithm contains two hyper-parameters (\ie $\alpha$ and $\beta$ in Eq.~(\ref{eq allloss})), which are empirically fixed as $\alpha=0.3$ and $\beta=1$. To analyze the sensitivity of our model to these hyper-parameters, we conduct ablation studies by evaluating our model on PACS \cite{li2017deeper} using different settings of them. Note we fix the value for one hyper-parameter when analyzing another. Results are listed in Table~\ref{tab params}. We observe that our method performs consistency well when the hyper-parameters are set to $\alpha \in [0.03,0.3]$ and $\beta \in [0.1,1]$.

\subsection{Effects of the Additional Parameters}
\label{sec moreparam}
{Although additional parameters must be utilized in our framework for domain estimation, our method uses the same amount of parameters as ERM when deploying: the domain estimation branch employed during training will be discarded while testing.
To test if the extra parameters utilized during training can affect the performance, we adopt the same structure for ERM, which can be viewed as a variant of our method without augmentation and the independent constraint (i.e. ERM+). We test the variant in the PACS dataset using the same setting. The results in the second column of Table~\ref{add ana} show that this variant performs only on par with ERM, and is inferior to our method. From this, we can see that the improvements do not stem from the additional parameters utilized during training.}

\subsection{Combing ERM with Augmentations} 
\label{sec ermaug}
{We further combine ERM with different augmentation strategies to evaluate their effectiveness. Similarly, we use the same early-branching structure for the implementations, which can be viewed as our method with different augmentations but without the independent constraint. 
As shown in the 5th column in Table~\ref{add ana}, ERM performs superior to others when combined with the proposed RDS, which validates the effectiveness of our proposed augmentation scheme. Meanwhile, we note that ERM with RDS performs inferior to our method due to the lack of the independent constraint. This indicates the effectiveness of the adopted HSIC constraint to improve generalizations.}

\begin{table}[t]
\centering
\caption{{Evaluations of ERM with more parameters during training (i.e. ERM+) and with different augmentations.}}
\scalebox{0.78}{
\begin{tabular}{c|ccccc}
\toprule 
& Art & Cartoon & Photo & Sketch &Avg.\\
\hline \hline
ERM+ &80.6$\pm$0.7 & 72.9$\pm$1.0 &94.2$\pm$0.7 &73.1$\pm$1.4 &80.2$\pm$0.5\\
ERM+ w/ Mixstyle  &80.0$\pm$0.3 &75.1$\pm$0.7 &94.6$\pm$0.2 &75.3$\pm$1.3 &81.3$\pm$0.3\\
ERM+ w/ DSU &80.1$\pm$1.0 &74.4$\pm$1.8 &94.5$\pm$0.3 &75.3$\pm$0.8 &81.1$\pm$0.2\\
ERM+ w/ RDS &80.5$\pm$0.5 &76.2$\pm$0.4 &94.0$\pm$0.2 &76.4$\pm$0.2 &81.8$\pm$0.1\\
Ours &81.3$\pm$1.4 &75.7$\pm$0.2 &94.6$\pm$0.3 &78.1$\pm$0.7 &82.4$\pm$0.4\\
\bottomrule
\end{tabular}}
\label{add ana}
\end{table}

\subsection{Limitations and Future Work}
One limitation of this work is that both semantic and domain labels are required during training. Since we need the domain-specific features to partly guide the semantic representation learning, it is non-trivial to directly extend our method to occasions where domain labels are unavailable. We leave the exploration of such a setting to future work.
Another drawback is that the framework requires another thread for the domain classification task during training, bringing more parameters to the overall pipeline. Although we can use the same amount of parameters for the target classification task when deploying the model, training both two tasks will inevitably bring extra effort to the system (training time comparisons between different methods are provided in our supplementary material). A promising improvement for the method is to use much fewer parameters to obtain comparable performance against the original implementation.

\section{Conclusion}
This work proposes to ease the domain generalization problem from a causal perspective. The framework is based on the basic marginal independent assumption, where the causal semantic and non-causal domain-specific features are assumed to be independent, and it is implemented with a classical dual-branch network work where each type of feature is extracted through separate branches. Considering that the basic independent-based assumption may be ineffective in identifying the causal feature, we suggest two complements for the basic independent-based framework, which is built upon the conditional invariant observation.
First, we infer that the common practice of using a shared feature extractor with two lightweight prediction heads for the dual-branch network is not optimal. Instead, we suggest an early-branching network structure upon the classical dual-branch design, where the causal and non-causal feature obtaining branches share the first few blocks while diverging thereafter;
Second, we find that incorporating samples, which are with diverse domains that correspond to the same object, is essential for the framework and can further help the basic framework to characterize causal features. To this end, we introduce a carefully designed random domain sampling strategy for augmentation. We theoretically reveal that these two findings can further improve the learning outcome. Experiments on challenging benchmarks show that our complete algorithm can obtain competitive performance against current arts.

\section*{Data Availability Statements}
We use data from existing benchmark \cite{gulrajani2020search} for the training and test processes. They are available at \url{https://github.com/facebookresearch/DomainBed}. Detailed results from our method are provided in the appendix.

\bibliographystyle{plain}
{\footnotesize
\bibliography{dg.bib}}

\begin{thebibliography}{10}

\bibitem{albuquerque2019generalizing}
Isabela Albuquerque, Jo{\~a}o Monteiro, Mohammad Darvishi, Tiago~H Falk, and
  Ioannis Mitliagkas.
\newblock Generalizing to unseen domains via distribution matching.
\newblock {\em arXiv preprint arXiv:1911.00804}, 2019.

\bibitem{arjovsky2019invariant}
Martin Arjovsky, L{\'e}on Bottou, Ishaan Gulrajani, and David Lopez-Paz.
\newblock Invariant risk minimization.
\newblock {\em arXiv preprint arXiv:1907.02893}, 2019.

\bibitem{atzmon2020causal}
Yuval Atzmon, Felix Kreuk, Uri Shalit, and Gal Chechik.
\newblock A causal view of compositional zero-shot recognition.
\newblock In {\em NeurIPS}, 2020.

\bibitem{balaji2018metareg}
Yogesh Balaji, Swami Sankaranarayanan, and Rama Chellappa.
\newblock Metareg: Towards domain generalization using meta-regularization.
\newblock In {\em NeurIPS}, 2018.

\bibitem{bandi2018detection}
Peter Bandi, Oscar Geessink, Quirine Manson, Marcory Van~Dijk, Maschenka
  Balkenhol, Meyke Hermsen, Babak~Ehteshami Bejnordi, Byungjae Lee, Kyunghyun
  Paeng, Aoxiao Zhong, et~al.
\newblock From detection of individual metastases to classification of lymph
  node status at the patient level: the camelyon17 challenge.
\newblock {\em IEEE TMI}, 38(2):550--560, 2018.

\bibitem{beery2021iwildcam}
Sara Beery, Arushi Agarwal, Elijah Cole, and Vighnesh Birodkar.
\newblock The iwildcam 2021 competition dataset.
\newblock {\em arXiv preprint arXiv:2105.03494}, 2021.

\bibitem{beery2018recognition}
Sara Beery, Grant Van~Horn, and Pietro Perona.
\newblock Recognition in terra incognita.
\newblock In {\em ECCV}, 2018.

\bibitem{ben2006analysis}
Shai Ben-David, John Blitzer, Koby Crammer, and Fernando Pereira.
\newblock Analysis of representations for domain adaptation.
\newblock In {\em NeurIPS}, 2006.

\bibitem{blanchard2021domain}
Gilles Blanchard, Aniket~Anand Deshmukh, {\"U}run Dogan, Gyemin Lee, and
  Clayton Scott.
\newblock Domain generalization by marginal transfer learning.
\newblock {\em JMLR}, 22(1):46--100, 2021.

\bibitem{carlucci2019hallucinating}
Fabio~Maria Carlucci, Paolo Russo, Tatiana Tommasi, and Barbara Caputo.
\newblock Hallucinating agnostic images to generalize across domains.
\newblock In {\em ICCVW}, 2019.

\bibitem{cha2021swad}
Junbum Cha, Sanghyuk Chun, Kyungjae Lee, Han-Cheol Cho, Seunghyun Park, Yunsung
  Lee, and Sungrae Park.
\newblock Swad: Domain generalization by seeking flat minima.
\newblock {\em NeurIPS}, 2021.

\bibitem{cha2022domain}
Junbum Cha, Kyungjae Lee, Sungrae Park, and Sanghyuk Chun.
\newblock Domain generalization by mutual-information regularization with
  pre-trained models.
\newblock In {\em ECCV}, 2022.

\bibitem{chen2022compound}
Chaoqi Chen, Jiongcheng Li, Xiaoguang Han, Xiaoqing Liu, and Yizhou Yu.
\newblock Compound domain generalization via meta-knowledge encoding.
\newblock In {\em CVPR}, 2022.

\bibitem{chen2022mix}
Chaoqi Chen, Luyao Tang, Feng Liu, Gangming Zhao, Yue Huang, and Yizhou Yu.
\newblock Mix and reason: Reasoning over semantic topology with data mixing for
  domain generalization.
\newblock In {\em NeurIPS}, 2022.

\bibitem{chen2022self}
Liang Chen, Yong Zhang, Yibing Song, Lingqiao Liu, and Jue Wang.
\newblock Self-supervised learning of adversarial example: Towards good
  generalizations for deepfake detection.
\newblock In {\em CVPR}, 2022.

\bibitem{chen2023improved}
Liang Chen, Yong Zhang, Yibing Song, Ying Shan, and Lingqiao Liu.
\newblock Improved test-time adaptation for domain generalization.
\newblock In {\em CVPR}, 2023.

\bibitem{chen2023domain}
Liang Chen, Yong Zhang, Yibing Song, Anton van~den Hengel, and Lingqiao Liu.
\newblock Domain generalization via rationale invariance.
\newblock In {\em ICCV}, 2023.

\bibitem{chen2022ost}
Liang Chen, Yong Zhang, Yibing Song, Jue Wang, and Lingqiao Liu.
\newblock Ost: Improving generalization of deepfake detection via one-shot
  test-time training.
\newblock In {\em NeurIPS}, 2022.

\bibitem{chen2021style}
Yang Chen, Yu~Wang, Yingwei Pan, Ting Yao, Xinmei Tian, and Tao Mei.
\newblock A style and semantic memory mechanism for domain generalization.
\newblock In {\em ICCV}, 2021.

\bibitem{christiansen2021causal}
Rune Christiansen, Niklas Pfister, Martin~Emil Jakobsen, Nicola Gnecco, and
  Jonas Peters.
\newblock A causal framework for distribution generalization.
\newblock {\em IEEE TPAMI}, 44(10):6614--6630, 2021.

\bibitem{christie2018functional}
Gordon Christie, Neil Fendley, James Wilson, and Ryan Mukherjee.
\newblock Functional map of the world.
\newblock In {\em CVPR}, 2018.

\bibitem{dou2019domain}
Qi~Dou, Daniel Coelho~de Castro, Konstantinos Kamnitsas, and Ben Glocker.
\newblock Domain generalization via model-agnostic learning of semantic
  features.
\newblock In {\em NeurIPS}, 2019.

\bibitem{everingham2010pascal}
Mark Everingham, Luc Van~Gool, Christopher~KI Williams, John Winn, and Andrew
  Zisserman.
\newblock The pascal visual object classes (voc) challenge.
\newblock {\em IJCV}, 88(2):303--338, 2010.

\bibitem{fang2013unbiased}
Chen Fang, Ye~Xu, and Daniel~N Rockmore.
\newblock Unbiased metric learning: On the utilization of multiple datasets and
  web images for softening bias.
\newblock In {\em ICCV}, 2013.

\bibitem{fei2004learning}
Li~Fei-Fei, Rob Fergus, and Pietro Perona.
\newblock Learning generative visual models from few training examples: An
  incremental bayesian approach tested on 101 object categories.
\newblock In {\em CVPR worksho}, 2004.

\bibitem{ganin2016domain}
Yaroslav Ganin, Evgeniya Ustinova, Hana Ajakan, Pascal Germain, Hugo
  Larochelle, Fran{\c{c}}ois Laviolette, Mario Marchand, and Victor Lempitsky.
\newblock Domain-adversarial training of neural networks.
\newblock {\em JMLR}, 17(1):2096--2030, 2016.

\bibitem{ghifary2016scatter}
Muhammad Ghifary, David Balduzzi, W~Bastiaan Kleijn, and Mengjie Zhang.
\newblock Scatter component analysis: A unified framework for domain adaptation
  and domain generalization.
\newblock {\em IEEE TPAMI}, 39(7):1414--1430, 2016.

\bibitem{ghifary2015domain}
Muhammad Ghifary, W~Bastiaan Kleijn, Mengjie Zhang, and David Balduzzi.
\newblock Domain generalization for object recognition with multi-task
  autoencoders.
\newblock In {\em ICCV}, 2015.

\bibitem{gong2016domain}
Mingming Gong, Kun Zhang, Tongliang Liu, Dacheng Tao, Clark Glymour, and
  Bernhard Sch{\"o}lkopf.
\newblock Domain adaptation with conditional transferable components.
\newblock In {\em ICML}, 2016.

\bibitem{gretton2005measuring}
Arthur Gretton, Olivier Bousquet, Alex Smola, and Bernhard Sch{\"o}lkopf.
\newblock Measuring statistical dependence with hilbert-schmidt norms.
\newblock In {\em ALT}, 2005.

\bibitem{gretton2007kernel}
Arthur Gretton, Kenji Fukumizu, Choon Teo, Le~Song, Bernhard Sch{\"o}lkopf, and
  Alex Smola.
\newblock A kernel statistical test of independence.
\newblock In {\em NeurIPS}, 2007.

\bibitem{grill2020bootstrap}
Jean-Bastien Grill, Florian Strub, Florent Altch{\'e}, Corentin Tallec, Pierre
  Richemond, Elena Buchatskaya, Carl Doersch, Bernardo Avila~Pires, Zhaohan
  Guo, Mohammad Gheshlaghi~Azar, et~al.
\newblock Bootstrap your own latent-a new approach to self-supervised learning.
\newblock {\em NeurIPS}, 2020.

\bibitem{gulrajani2020search}
Ishaan Gulrajani and David Lopez-Paz.
\newblock In search of lost domain generalization.
\newblock In {\em ICLR}, 2021.

\bibitem{halva2020hidden}
Hermanni H{\"a}lv{\"a} and Aapo Hyvarinen.
\newblock Hidden markov nonlinear ica: Unsupervised learning from nonstationary
  time series.
\newblock In {\em UAI}, 2020.

\bibitem{harary2022unsupervised}
Sivan Harary, Eli Schwartz, Assaf Arbelle, Peter Staar, Shady Abu-Hussein, Elad
  Amrani, Roei Herzig, Amit Alfassy, Raja Giryes, Hilde Kuehne, et~al.
\newblock Unsupervised domain generalization by learning a bridge across
  domains.
\newblock In {\em CVPR}, 2022.

\bibitem{he2016deep}
Kaiming He, Xiangyu Zhang, Shaoqing Ren, and Jian Sun.
\newblock Deep residual learning for image recognition.
\newblock In {\em CVPR}, 2016.

\bibitem{heinze2021conditional}
Christina Heinze-Deml and Nicolai Meinshausen.
\newblock Conditional variance penalties and domain shift robustness.
\newblock {\em Machine Learning}, 110(2):303--348, 2021.

\bibitem{hu2020domain}
Shoubo Hu, Kun Zhang, Zhitang Chen, and Laiwan Chan.
\newblock Domain generalization via multidomain discriminant analysis.
\newblock In {\em UAI}, 2020.

\bibitem{huang2017densely}
Gao Huang, Zhuang Liu, Laurens Van Der~Maaten, and Kilian~Q Weinberger.
\newblock Densely connected convolutional networks.
\newblock In {\em CVPR}, 2017.

\bibitem{huang2017arbitrary}
Xun Huang and Serge Belongie.
\newblock Arbitrary style transfer in real-time with adaptive instance
  normalization.
\newblock In {\em ICCV}, 2017.

\bibitem{huang2020self}
Zeyi Huang, Haohan Wang, Eric~P Xing, and Dong Huang.
\newblock Self-challenging improves cross-domain generalization.
\newblock In {\em ECCV}, 2020.

\bibitem{hyvarinen2016unsupervised}
Aapo Hyvarinen and Hiroshi Morioka.
\newblock Unsupervised feature extraction by time-contrastive learning and
  nonlinear ica.
\newblock {\em NeurIPS}, 2016.

\bibitem{jumper2021highly}
John Jumper, Richard Evans, Alexander Pritzel, Tim Green, Michael Figurnov,
  Olaf Ronneberger, Kathryn Tunyasuvunakool, Russ Bates, Augustin
  {\v{Z}}{\'\i}dek, Anna Potapenko, et~al.
\newblock Highly accurate protein structure prediction with alphafold.
\newblock {\em Nature}, 596(7873):583--589, 2021.

\bibitem{kang2022style}
Juwon Kang, Sohyun Lee, Namyup Kim, and Suha Kwak.
\newblock Style neophile: Constantly seeking novel styles for domain
  generalization.
\newblock In {\em CVPR}, 2022.

\bibitem{khosla2012undoing}
Aditya Khosla, Tinghui Zhou, Tomasz Malisiewicz, Alexei~A Efros, and Antonio
  Torralba.
\newblock Undoing the damage of dataset bias.
\newblock In {\em ECCV}, 2012.

\bibitem{kim2021selfreg}
Daehee Kim, Youngjun Yoo, Seunghyun Park, Jinkyu Kim, and Jaekoo Lee.
\newblock Selfreg: Self-supervised contrastive regularization for domain
  generalization.
\newblock In {\em ICCV}, 2021.

\bibitem{koh2021wilds}
Pang~Wei Koh, Shiori Sagawa, Henrik Marklund, Sang~Michael Xie, Marvin Zhang,
  Akshay Balsubramani, Weihua Hu, Michihiro Yasunaga, Richard~Lanas Phillips,
  Irena Gao, et~al.
\newblock Wilds: A benchmark of in-the-wild distribution shifts.
\newblock In {\em ICML}, 2021.

\bibitem{koyama2020invariance}
Masanori Koyama and Shoichiro Yamaguchi.
\newblock When is invariance useful in an out-of-distribution generalization
  problem?
\newblock {\em arXiv preprint arXiv:2008.01883}, 2020.

\bibitem{krueger2021out}
David Krueger, Ethan Caballero, Joern-Henrik Jacobsen, Amy Zhang, Jonathan
  Binas, Dinghuai Zhang, Remi Le~Priol, and Aaron Courville.
\newblock Out-of-distribution generalization via risk extrapolation (rex).
\newblock In {\em ICML}, 2021.

\bibitem{li2017deeper}
Da~Li, Yongxin Yang, Yi-Zhe Song, and Timothy~M Hospedales.
\newblock Deeper, broader and artier domain generalization.
\newblock In {\em ICCV}, 2017.

\bibitem{li2018learning}
Da~Li, Yongxin Yang, Yi-Zhe Song, and Timothy~M Hospedales.
\newblock Learning to generalize: Meta-learning for domain generalization.
\newblock In {\em AAAI}, 2018.

\bibitem{li2019episodic}
Da~Li, Jianshu Zhang, Yongxin Yang, Cong Liu, Yi-Zhe Song, and Timothy~M
  Hospedales.
\newblock Episodic training for domain generalization.
\newblock In {\em ICCV}, 2019.

\bibitem{li2018domain}
Haoliang Li, Sinno~Jialin Pan, Shiqi Wang, and Alex~C Kot.
\newblock Domain generalization with adversarial feature learning.
\newblock In {\em CVPR}, 2018.

\bibitem{li2021simple}
Pan Li, Da~Li, Wei Li, Shaogang Gong, Yanwei Fu, and Timothy~M Hospedales.
\newblock A simple feature augmentation for domain generalization.
\newblock In {\em ICCV}, 2021.

\bibitem{li2022uncertainty}
Xiaotong Li, Yongxing Dai, Yixiao Ge, Jun Liu, Ying Shan, and Ling-Yu Duan.
\newblock Uncertainty modeling for out-of-distribution generalization.
\newblock In {\em ICLR}, 2022.

\bibitem{li2018deep}
Ya~Li, Xinmei Tian, Mingming Gong, Yajing Liu, Tongliang Liu, Kun Zhang, and
  Dacheng Tao.
\newblock Deep domain generalization via conditional invariant adversarial
  networks.
\newblock In {\em ECCV}, 2018.

\bibitem{liu2021learning}
Chang Liu, Xinwei Sun, Jindong Wang, Haoyue Tang, Tao Li, Tao Qin, Wei Chen,
  and Tie-Yan Liu.
\newblock Learning causal semantic representation for out-of-distribution
  prediction.
\newblock In {\em NeurIPS}, 2021.

\bibitem{magliacane2018domain}
Sara Magliacane, Thijs Van~Ommen, Tom Claassen, Stephan Bongers, Philip
  Versteeg, and Joris~M Mooij.
\newblock Domain adaptation by using causal inference to predict invariant
  conditional distributions.
\newblock {\em NeurIPS}, 2018.

\bibitem{mahajan2021domain}
Divyat Mahajan, Shruti Tople, and Amit Sharma.
\newblock Domain generalization using causal matching.
\newblock In {\em ICML}, 2021.

\bibitem{muandet2013domain}
Krikamol Muandet, David Balduzzi, and Bernhard Sch{\"o}lkopf.
\newblock Domain generalization via invariant feature representation.
\newblock In {\em ICML}, 2013.

\bibitem{nam2021reducing}
Hyeonseob Nam, HyunJae Lee, Jongchan Park, Wonjun Yoon, and Donggeun Yoo.
\newblock Reducing domain gap by reducing style bias.
\newblock In {\em CVPR}, 2021.

\bibitem{pandey2021generalization}
Prashant Pandey, Mrigank Raman, Sumanth Varambally, and Prathosh Ap.
\newblock Generalization on unseen domains via inference-time label-preserving
  target projections.
\newblock In {\em CVPR}, 2021.

\bibitem{pearl2009causality}
Judea Pearl.
\newblock {\em Causality}.
\newblock Cambridge university press, 2009.

\bibitem{peng2019moment}
Xingchao Peng, Qinxun Bai, Xide Xia, Zijun Huang, Kate Saenko, and Bo~Wang.
\newblock Moment matching for multi-source domain adaptation.
\newblock In {\em ICCV}, 2019.

\bibitem{peters2016causal}
Jonas Peters, Peter B{\"u}hlmann, and Nicolai Meinshausen.
\newblock Causal inference by using invariant prediction: identification and
  confidence intervals.
\newblock {\em Journal of the Royal Statistical Society. Series B (Statistical
  Methodology)}, pages 947--1012, 2016.

\bibitem{peters2017elements}
Jonas Peters, Dominik Janzing, and Bernhard Sch{\"o}lkopf.
\newblock {\em Elements of causal inference: foundations and learning
  algorithms}.
\newblock The MIT Press, 2017.

\bibitem{pezeshki2021gradient}
Mohammad Pezeshki, Oumar Kaba, Yoshua Bengio, Aaron~C Courville, Doina Precup,
  and Guillaume Lajoie.
\newblock Gradient starvation: A learning proclivity in neural networks.
\newblock In {\em NeurIPS}, 2021.

\bibitem{piratla2021focus}
Vihari Piratla, Praneeth Netrapalli, and Sunita Sarawagi.
\newblock Focus on the common good: Group distributional robustness follows.
\newblock In {\em ICLR}, 2022.

\bibitem{rame2021ishr}
Alexandre Rame, Corentin Dancette, and Matthieu Cord.
\newblock Fishr: Invariant gradient variances for out-of-distribution
  generalization.
\newblock In {\em ICML}, 2022.

\bibitem{rojas2018invariant}
Mateo Rojas-Carulla, Bernhard Sch{\"o}lkopf, Richard Turner, and Jonas Peters.
\newblock Invariant models for causal transfer learning.
\newblock {\em JMLR}, 19(1):1309--1342, 2018.

\bibitem{ruan2021optimal}
Yangjun Ruan, Yann Dubois, and Chris~J Maddison.
\newblock Optimal representations for covariate shift.
\newblock In {\em ICLR}, 2022.

\bibitem{russell2008labelme}
Bryan~C Russell, Antonio Torralba, Kevin~P Murphy, and William~T Freeman.
\newblock Labelme: a database and web-based tool for image annotation.
\newblock {\em IJCV}, 77(1):157--173, 2008.

\bibitem{sagawa2019distributionally}
Shiori Sagawa, Pang~Wei Koh, Tatsunori~B Hashimoto, and Percy Liang.
\newblock Distributionally robust neural networks for group shifts: On the
  importance of regularization for worst-case generalization.
\newblock In {\em ICLR}, 2020.

\bibitem{schneider2020improving}
Steffen Schneider, Evgenia Rusak, Luisa Eck, Oliver Bringmann, Wieland Brendel,
  and Matthias Bethge.
\newblock Improving robustness against common corruptions by covariate shift
  adaptation.
\newblock {\em NeurIPS}, 2020.

\bibitem{scholkopf2012causal}
Bernhard Sch{\"o}lkopf, Dominik Janzing, Jonas Peters, Eleni Sgouritsa, Kun
  Zhang, and Joris Mooij.
\newblock On causal and anticausal learning.
\newblock {\em arXiv preprint arXiv:1206.6471}, 2012.

\bibitem{shi2021gradient}
Yuge Shi, Jeffrey Seely, Philip~HS Torr, N~Siddharth, Awni Hannun, Nicolas
  Usunier, and Gabriel Synnaeve.
\newblock Gradient matching for domain generalization.
\newblock In {\em ICLR}, 2021.

\bibitem{silver2016mastering}
David Silver, Aja Huang, Chris~J Maddison, Arthur Guez, Laurent Sifre, George
  Van Den~Driessche, Julian Schrittwieser, Ioannis Antonoglou, Veda
  Panneershelvam, Marc Lanctot, et~al.
\newblock Mastering the game of go with deep neural networks and tree search.
\newblock {\em nature}, 529(7587):484--489, 2016.

\bibitem{sun2016deep}
Baochen Sun and Kate Saenko.
\newblock Deep coral: Correlation alignment for deep domain adaptation.
\newblock In {\em ECCV}, 2016.

\bibitem{taylor2019rxrx1}
James Taylor, Berton Earnshaw, Ben Mabey, Mason Victors, and Jason Yosinski.
\newblock Rxrx1: An image set for cellular morphological variation across many
  experimental batches.
\newblock In {\em ICLRW}, 2019.

\bibitem{touvron2021training}
Hugo Touvron, Matthieu Cord, Matthijs Douze, Francisco Massa, Alexandre
  Sablayrolles, and Herv{\'e} J{\'e}gou.
\newblock Training data-efficient image transformers \& distillation through
  attention.
\newblock In {\em ICML}, 2021.

\bibitem{vapnik1999nature}
Vladimir Vapnik.
\newblock {\em The nature of statistical learning theory}.
\newblock Springer science \& business media, 1999.

\bibitem{venkateswara2017deep}
Hemanth Venkateswara, Jose Eusebio, Shayok Chakraborty, and Sethuraman
  Panchanathan.
\newblock Deep hashing network for unsupervised domain adaptation.
\newblock In {\em CVPR}, 2017.

\bibitem{von2021self}
Julius Von~K{\"u}gelgen, Yash Sharma, Luigi Gresele, Wieland Brendel, Bernhard
  Sch{\"o}lkopf, Michel Besserve, and Francesco Locatello.
\newblock Self-supervised learning with data augmentations provably isolates
  content from style.
\newblock {\em NeurIPS}, 2021.

\bibitem{wang2022out}
Ruoyu Wang, Mingyang Yi, Zhitang Chen, and Shengyu Zhu.
\newblock Out-of-distribution generalization with causal invariant
  transformations.
\newblock In {\em CVPR}, 2022.

\bibitem{wang2022causal}
Xinyi Wang, Michael Saxon, Jiachen Li, Hongyang Zhang, Kun Zhang, and
  William~Yang Wang.
\newblock Causal balancing for domain generalization.
\newblock In {\em ICLR}, 2023.

\bibitem{xiao2010sun}
Jianxiong Xiao, James Hays, Krista~A Ehinger, Aude Oliva, and Antonio Torralba.
\newblock Sun database: Large-scale scene recognition from abbey to zoo.
\newblock In {\em CVPR}, 2010.

\bibitem{xu2021fourier}
Qinwei Xu, Ruipeng Zhang, Ya~Zhang, Yanfeng Wang, and Qi~Tian.
\newblock A fourier-based framework for domain generalization.
\newblock In {\em CVPR}, 2021.

\bibitem{xu2020robust}
Zhenlin Xu, Deyi Liu, Junlin Yang, Colin Raffel, and Marc Niethammer.
\newblock Robust and generalizable visual representation learning via random
  convolutions.
\newblock In {\em ICLR}, 2021.

\bibitem{yan2020improve}
Shen Yan, Huan Song, Nanxiang Li, Lincan Zou, and Liu Ren.
\newblock Improve unsupervised domain adaptation with mixup training.
\newblock {\em arXiv preprint arXiv:2001.00677}, 2020.

\bibitem{yang2021adversarial}
Fu-En Yang, Yuan-Chia Cheng, Zu-Yun Shiau, and Yu-Chiang~Frank Wang.
\newblock Adversarial teacher-student representation learning for domain
  generalization.
\newblock In {\em NeurIPS}, 2021.

\bibitem{yao2022improving}
Huaxiu Yao, Yu~Wang, Sai Li, Linjun Zhang, Weixin Liang, James Zou, and Chelsea
  Finn.
\newblock Improving out-of-distribution robustness via selective augmentation.
\newblock In {\em ICML}, 2022.

\bibitem{yuan2021tokens}
Li~Yuan, Yunpeng Chen, Tao Wang, Weihao Yu, Yujun Shi, Zi-Hang Jiang,
  Francis~EH Tay, Jiashi Feng, and Shuicheng Yan.
\newblock Tokens-to-token vit: Training vision transformers from scratch on
  imagenet.
\newblock In {\em ICCV}, 2021.

\bibitem{zbontar2021barlow}
Jure Zbontar, Li~Jing, Ishan Misra, Yann LeCun, and St{\'e}phane Deny.
\newblock Barlow twins: Self-supervised learning via redundancy reduction.
\newblock In {\em ICML}, 2021.

\bibitem{zhang2017mixup}
Hongyi Zhang, Moustapha Cisse, Yann~N Dauphin, and David Lopez-Paz.
\newblock mixup: Beyond empirical risk minimization.
\newblock In {\em ICLR}, 2018.

\bibitem{zhang2013domain}
Kun Zhang, Bernhard Sch{\"o}lkopf, Krikamol Muandet, and Zhikun Wang.
\newblock Domain adaptation under target and conditional shift.
\newblock In {\em ICML}, 2013.

\bibitem{zhang2020adaptive}
Marvin Zhang, Henrik Marklund, Nikita Dhawan, Abhishek Gupta, Sergey Levine,
  and Chelsea Finn.
\newblock Adaptive risk minimization: A meta-learning approach for tackling
  group distribution shift.
\newblock {\em arXiv preprint arXiv:2007.02931}, 2020.

\bibitem{zhou2020learning}
Kaiyang Zhou, Yongxin Yang, Timothy Hospedales, and Tao Xiang.
\newblock Learning to generate novel domains for domain generalization.
\newblock In {\em ECCV}, 2020.

\bibitem{zhou2021domain}
Kaiyang Zhou, Yongxin Yang, Yu~Qiao, and Tao Xiang.
\newblock Domain generalization with mixstyle.
\newblock In {\em ICLR}, 2021.

\end{thebibliography}
\section*{Appendix: Detailed Results of the Benchmark Datasets}
\label{sec 3}
In this section, we present the average accuracy in each domain from different datasets. For all the experiments, we use the "training-domain validate set" as the model selection method, and all methods are trained with the leave-one-out strategy using the ResNet18 \cite{he2016deep} backbones. We also present the training times for different methods in the PACS datasets.

\begin{table*}[h]
\centering
\caption{Average accuracies on the PACS \cite{li2017deeper} datasets using the default hyper-parameter settings in DomainBed \cite{gulrajani2020search}. Time (min) here is the training time for one trial per GPU for the corresponding method.}
\scalebox{1}{
\begin{tabular}{lC{1.6cm}C{1.6cm}C{1.6cm}C{1.6cm}C{1.6cm}|C{0.8cm}}
\toprule 
& art & cartoon & photo & sketch & Average &Time\\
\hline \hline
ERM \cite{vapnik1999nature} &78.0 $\pm$ 1.3 &73.4 $\pm$ 0.8 &94.1 $\pm$ 0.4 &73.6 $\pm$ 2.2 &79.8 $\pm$ 0.4 &24\\
IRM \cite{arjovsky2019invariant} &76.9 $\pm$ 2.6 &75.1 $\pm$ 0.7 &94.3 $\pm$ 0.4 &77.4 $\pm$ 0.4 &80.9 $\pm$ 0.5 &18\\
GroupGRO \cite{sagawa2019distributionally} &77.7 $\pm$ 2.6 &76.4 $\pm$ 0.3 &94.0 $\pm$ 0.3 &74.8 $\pm$ 1.3 &80.7 $\pm$ 0.4 &24\\
Mixup \cite{yan2020improve} &79.3 $\pm$ 1.1 &74.2 $\pm$ 0.3 &94.9 $\pm$ 0.3 &68.3 $\pm$ 2.7 &79.2 $\pm$ 0.9 &18\\
MLDG \cite{li2018learning} &78.4 $\pm$ 0.7 &75.1 $\pm$ 0.5 &94.8 $\pm$ 0.4 &76.7 $\pm$ 0.8 &81.3 $\pm$ 0.2 &32\\
CORAL \cite{sun2016deep} &81.5 $\pm$ 0.5 &75.4 $\pm$ 0.7 &95.2 $\pm$ 0.5 &74.8 $\pm$ 0.4 &81.7 $\pm$ 0.0 &24\\
MMD \cite{li2018domain} &81.3 $\pm$ 0.6 &75.5 $\pm$ 1.0 &94.0 $\pm$ 0.5 &74.3 $\pm$ 1.5 &81.3 $\pm$ 0.8 &18\\
DANN \cite{ganin2016domain} &79.0 $\pm$ 0.6 &72.5 $\pm$ 0.7 &94.4 $\pm$ 0.5 &70.8 $\pm$ 3.0 &79.2 $\pm$ 0.3 &17\\
CDANN \cite{li2018deep} &80.4 $\pm$ 0.8 &73.7 $\pm$ 0.3 &93.1 $\pm$ 0.6 &74.2 $\pm$ 1.7 &80.3 $\pm$ 0.5 &24\\
MTL \cite{blanchard2021domain} &78.7 $\pm$ 0.6 &73.4 $\pm$ 1.0 &94.1 $\pm$ 0.6 &74.4 $\pm$ 3.0 &80.1 $\pm$ 0.8 &18\\
SagNet \cite{nam2021reducing} &82.9 $\pm$ 0.4 &73.2 $\pm$ 1.1 &94.6 $\pm$ 0.5 &76.1 $\pm$ 1.8 &81.7 $\pm$ 0.6 &24\\
ARM \cite{zhang2020adaptive} &79.4 $\pm$ 0.6 &75.0 $\pm$ 0.7 &94.3 $\pm$ 0.6 &73.8 $\pm$ 0.6 &80.6 $\pm$ 0.5 &24\\
VREx \cite {krueger2021out} &74.4 $\pm$ 0.7 &75.0 $\pm$ 0.4 &93.3 $\pm$ 0.3 &78.1 $\pm$ 0.9 &80.2 $\pm$ 0.5 &17\\
RSC \cite{huang2020self} &78.5 $\pm$ 1.1 &73.3 $\pm$ 0.9 &93.6 $\pm$ 0.6 &76.5 $\pm$ 1.4 &80.5 $\pm$ 0.2 &25\\
SelfReg \cite{kim2021selfreg} &82.5 $\pm$ 0.8 &74.4 $\pm$ 1.5 &95.4 $\pm$ 0.5 &74.9 $\pm$ 1.3 &81.8 $\pm$ 0.3 &25\\
MixStyle \cite{zhou2021domain} &82.6 $\pm$ 1.2 &76.3 $\pm$ 0.4 &94.2 $\pm$ 0.3 &77.5 $\pm$ 1.3 &82.6 $\pm$ 0.4 &25\\
Fish \cite{shi2021gradient} &80.9 $\pm$ 1.0 &75.9 $\pm$ 0.4 &95.0 $\pm$ 0.4 & 76.2 $\pm$ 1.0 &82.0 $\pm$ 0.3 &52\\
SD \cite{pezeshki2021gradient} &83.2 $\pm$ 0.6 &74.6 $\pm$ 0.3 &94.6 $\pm$ 0.1 &75.1 $\pm$ 1.6 &81.9 $\pm$ 0.3 &25\\
CAD \cite{ruan2021optimal} &83.9 $\pm$ 0.8 &74.2 $\pm$ 0.4 &94.6 $\pm$ 0.4 &75.0 $\pm$ 1.2 &81.9 $\pm$ 0.3 &24\\
CondCAD \cite{ruan2021optimal} &79.7 $\pm$ 1.0 &74.2 $\pm$ 0.9 &94.6 $\pm$ 0.4 &74.8 $\pm$ 1.4 &80.8 $\pm$ 0.5 &26\\
Fishr \cite{rame2021ishr} &81.2 $\pm$ 0.4 &75.8 $\pm$ 0.8 &94.3 $\pm$ 0.3 &73.8 $\pm$ 0.6 &81.3 $\pm$ 0.3 &17\\
Ours &81.3 $\pm$ 1.4 &75.7 $\pm$ 0.2 &94.6 $\pm$ 0.3 &78.1 $\pm$ 0.7 &82.4 $\pm$ 0.4 &29\\
\bottomrule
\end{tabular}}
\label{tab pacs}
\end{table*}

\begin{table*}[h]
\centering
\caption{Average accuracies on the VLCS \cite{fang2013unbiased} datasets using the default hyper-parameter settings in DomainBed \cite{gulrajani2020search}.}
\scalebox{1}{
\begin{tabular}{lC{1.8cm}C{1.8cm}C{1.8cm}C{1.8cm}C{1.8cm}}
\toprule 
&Caltech  & LabelMe &Sun & VOC & Average\\
\hline \hline
ERM \cite{vapnik1999nature} &97.7 $\pm$ 0.3 &62.1 $\pm$ 0.9 &70.3 $\pm$ 0.9 &73.2 $\pm$ 0.7 &75.8 $\pm$ 0.2 \\
IRM \cite{arjovsky2019invariant} &96.1 $\pm$ 0.8 &62.5 $\pm$ 0.3 &69.9 $\pm$ 0.7 &72.0 $\pm$ 1.4 &75.1 $\pm$ 0.1\\
GroupGRO \cite{sagawa2019distributionally} &96.7 $\pm$ 0.6 &61.7 $\pm$ 1.5 &70.2 $\pm$ 1.8 &72.9 $\pm$ 0.6 &75.4 $\pm$ 1.0 \\
Mixup \cite{yan2020improve} &95.6 $\pm$ 1.5 &62.7 $\pm$ 0.4 &71.3 $\pm$ 0.3 &75.4 $\pm$ 0.2 &76.2 $\pm$ 0.3 \\
MLDG \cite{li2018learning} &95.8 $\pm$ 0.5 &63.3 $\pm$ 0.8 &68.5 $\pm$ 0.5 &73.1 $\pm$ 0.8 &75.2 $\pm$ 0.3 \\
CORAL \cite{sun2016deep} &96.5 $\pm$ 0.3 &62.8 $\pm$ 0.1 &69.1 $\pm$ 0.6 &73.8 $\pm$ 1.0 &75.5 $\pm$ 0.4 \\
MMD \cite{li2018domain} &96.0 $\pm$ 0.8 &64.3 $\pm$ 0.6 &68.5 $\pm$ 0.6 &70.8 $\pm$ 0.1 &74.9 $\pm$ 0.5 \\
DANN \cite{ganin2016domain} &97.2 $\pm$ 0.1 &63.3 $\pm$ 0.6 &70.2 $\pm$ 0.9 &74.4 $\pm$ 0.2 &76.3 $\pm$ 0.2 \\
CDANN \cite{li2018deep} &95.4 $\pm$ 1.2 &62.6 $\pm$ 0.6 &69.9 $\pm$ 1.3 &76.2 $\pm$ 0.5 &76.0 $\pm$ 0.5 \\
MTL \cite{blanchard2021domain} &94.4 $\pm$ 2.3 &65.0 $\pm$ 0.6 &69.6 $\pm$ 0.6 &71.7 $\pm$ 1.3 &75.2 $\pm$ 0.3 \\
SagNet \cite{nam2021reducing} &94.9 $\pm$ 0.7 &61.9 $\pm$ 0.7 &69.6 $\pm$ 1.3 &75.2 $\pm$ 0.6 &75.4 $\pm$ 0.8 \\
ARM \cite{zhang2020adaptive} &96.9 $\pm$ 0.5 &61.9 $\pm$ 0.4 &71.6 $\pm$ 0.1 &73.3 $\pm$ 0.4 &75.9 $\pm$ 0.3 \\
VREx \cite {krueger2021out} &96.2 $\pm$ 0.0 &62.5 $\pm$ 1.3 &69.3 $\pm$ 0.9 &73.1 $\pm$ 1.2 &75.3 $\pm$ 0.6 \\
RSC \cite{huang2020self} &96.2 $\pm$ 0.0 &63.6 $\pm$ 1.3 &69.8 $\pm$ 1.0 &72.0 $\pm$ 0.4 &75.4 $\pm$ 0.3 \\
SelfReg \cite{kim2021selfreg} &95.8 $\pm$ 0.6 &63.4 $\pm$ 1.1 &71.1 $\pm$ 0.6 &75.3 $\pm$ 0.6 &76.4 $\pm$ 0.7 \\
MixStyle \cite{zhou2021domain} &97.3 $\pm$ 0.3 &61.6 $\pm$ 0.1 &70.4 $\pm$ 0.7 &71.3 $\pm$ 1.9 &75.2 $\pm$ 0.7 \\
Fish \cite{shi2021gradient} &97.4 $\pm$ 0.2 &63.4 $\pm$ 0.1 &71.5 $\pm$ 0.4 &75.2 $\pm$ 0.7 &76.9 $\pm$ 0.2 \\
SD \cite{pezeshki2021gradient} &96.5 $\pm$ 0.4 &62.2 $\pm$ 0.0 &69.7 $\pm$ 0.9 &73.6 $\pm$ 0.4 &75.5 $\pm$ 0.4 \\
CAD \cite{ruan2021optimal} &94.5 $\pm$ 0.9 &63.5 $\pm$ 0.6 &70.4 $\pm$ 1.2 &72.4 $\pm$ 1.3 &75.2 $\pm$ 0.6 \\
CondCAD \cite{ruan2021optimal} &96.5 $\pm$ 0.8 &62.6 $\pm$ 0.4 &69.1 $\pm$ 0.2 &76.0 $\pm$ 0.2 &76.1 $\pm$ 0.3 \\
Fishr \cite{rame2021ishr} &97.2 $\pm$ 0.6 &63.3 $\pm$ 0.7 &70.4 $\pm$ 0.6 &74.0 $\pm$ 0.8 &76.2 $\pm$ 0.3\\
Ours &97.0 $\pm$ 0.2 &63.7 $\pm$ 0.9 &71.0 $\pm$ 0.3 &74.4 $\pm$ 0.8 &76.5 $\pm$ 0.4 \\
\bottomrule
\end{tabular}}
\label{tab vlcs}
\end{table*}

\begin{table*}
\centering
\caption{Average accuracies on the OfficeHome \cite{venkateswara2017deep} datasets using the default hyper-parameter settings in DomainBed \cite{gulrajani2020search}.}
\scalebox{1}{
\begin{tabular}{lC{1.8cm}C{1.8cm}C{1.8cm}C{1.8cm}C{1.8cm}}
\toprule 
&art  & clipart &product & real & Average\\
\hline \hline
ERM \cite{vapnik1999nature} &52.2 $\pm$ 0.2 &48.7 $\pm$ 0.5 &69.9 $\pm$ 0.5 &71.7 $\pm$ 0.5 &60.6 $\pm$ 0.2 \\
IRM \cite{arjovsky2019invariant} &49.7 $\pm$ 0.2 &46.8 $\pm$ 0.5 &67.5 $\pm$ 0.4 &68.1 $\pm$ 0.6 &58.0 $\pm$ 0.1\\
GroupGRO \cite{sagawa2019distributionally} &52.6 $\pm$ 1.1 &48.2 $\pm$ 0.9 &69.9 $\pm$ 0.4 &71.5 $\pm$ 0.8 &60.6 $\pm$ 0.3 \\
Mixup \cite{yan2020improve} &54.0 $\pm$ 0.7 &49.3 $\pm$ 0.7 &70.7 $\pm$ 0.7 &72.6 $\pm$ 0.3 &61.7 $\pm$ 0.5 \\
MLDG \cite{li2018learning} &53.1 $\pm$ 0.3 &48.4 $\pm$ 0.3 &70.5 $\pm$ 0.7 &71.7 $\pm$ 0.4 &60.9 $\pm$ 0.2 \\
CORAL \cite{sun2016deep} &55.1 $\pm$ 0.7 &49.7 $\pm$ 0.9 &71.8 $\pm$ 0.2 &73.1 $\pm$ 0.5 &62.4 $\pm$ 0.4 \\
MMD \cite{li2018domain} &50.9 $\pm$ 1.0 &48.7 $\pm$ 0.3 &69.3 $\pm$ 0.7 &70.7 $\pm$ 1.3 &59.9 $\pm$ 0.4 \\
DANN \cite{ganin2016domain} &51.8 $\pm$ 0.5 &47.1 $\pm$ 0.1 &69.1 $\pm$ 0.7 &70.2 $\pm$ 0.7 &59.5 $\pm$ 0.5 \\
CDANN \cite{li2018deep} &51.4 $\pm$ 0.5 &46.9 $\pm$ 0.6 &68.4 $\pm$ 0.5 &70.4 $\pm$ 0.4 &59.3 $\pm$ 0.4 \\
MTL \cite{blanchard2021domain} &51.6 $\pm$ 1.5 &47.7 $\pm$ 0.5 &69.1 $\pm$ 0.3 &71.0 $\pm$ 0.6 &59.9 $\pm$ 0.5 \\
SagNet \cite{nam2021reducing} &55.3 $\pm$ 0.4 &49.6 $\pm$ 0.2 &72.1 $\pm$ 0.4 &73.2 $\pm$ 0.4 &62.5 $\pm$ 0.3 \\
ARM \cite{zhang2020adaptive} &51.3 $\pm$ 0.9 &48.5 $\pm$ 0.4 &68.0 $\pm$ 0.3 &70.6 $\pm$ 0.1 &59.6 $\pm$ 0.3 \\
VREx \cite {krueger2021out} &51.1 $\pm$ 0.3 &47.4 $\pm$ 0.6 &69.0 $\pm$ 0.4 &70.5 $\pm$ 0.4 &59.5 $\pm$ 0.1 \\
RSC \cite{huang2020self} &49.0 $\pm$ 0.1 &46.2 $\pm$ 1.5 &67.8 $\pm$ 0.7 &70.6 $\pm$ 0.3 &58.4 $\pm$ 0.6 \\
SelfReg \cite{kim2021selfreg} &55.1 $\pm$ 0.8 &49.2 $\pm$ 0.6 &72.2 $\pm$ 0.3 &73.0 $\pm$ 0.3 &62.4 $\pm$ 0.1 \\
MixStyle \cite{zhou2021domain} &50.8 $\pm$ 0.6 &51.4 $\pm$ 1.1 &67.6 $\pm$ 1.3 &68.8 $\pm$ 0.5 &59.6 $\pm$ 0.8 \\
Fish \cite{shi2021gradient} &54.6 $\pm$ 1.0 &49.6 $\pm$ 1.0 &71.3 $\pm$ 0.6 &72.4 $\pm$ 0.2 &62.0 $\pm$ 0.6 \\
SD \cite{pezeshki2021gradient} &55.0 $\pm$ 0.4 &51.3 $\pm$ 0.5 &72.5 $\pm$ 0.2 &72.7 $\pm$ 0.3 &62.9 $\pm$ 0.2 \\
CAD \cite{ruan2021optimal} &52.1 $\pm$ 0.6 &48.3 $\pm$ 0.5 &69.7 $\pm$ 0.3 &71.9 $\pm$ 0.4 &60.5 $\pm$ 0.3 \\
CondCAD \cite{ruan2021optimal} &53.3 $\pm$ 0.6 &48.4 $\pm$ 0.2 &69.8 $\pm$ 0.9 &72.6 $\pm$ 0.1 &61.0 $\pm$ 0.4 \\
Fishr \cite{rame2021ishr} &52.6 $\pm$ 0.9 &48.6 $\pm$ 0.3 &69.9 $\pm$ 0.6 &72.4 $\pm$ 0.4 &60.9 $\pm$ 0.3 \\
Ours &54.5 $\pm$ 0.3 &52.4 $\pm$ 0.2 &71.4 $\pm$ 0.5 &70.3 $\pm$ 0.3 &62.2 $\pm$ 0.1 \\
\bottomrule
\end{tabular}}
\label{tab office}
\end{table*}

\newpage
\begin{table*}
\centering
\caption{Average accuracies on the TerraInc \cite{beery2018recognition} datasets using the default hyper-parameter settings in DomainBed \cite{gulrajani2020search}.}
\scalebox{1}{
\begin{tabular}{lC{1.8cm}C{1.8cm}C{1.8cm}C{1.8cm}C{1.8cm}}
\toprule 
&L100  & L38 &L43 & L46 & Average\\
\hline \hline
ERM \cite{vapnik1999nature} &42.1 $\pm$ 2.5 &30.1 $\pm$ 1.2 &48.9 $\pm$ 0.6 &34.0 $\pm$ 1.1 &38.8 $\pm$ 1.0 \\
IRM \cite{arjovsky2019invariant} &41.8 $\pm$ 1.8 &29.0 $\pm$ 3.6 &49.6 $\pm$ 2.1 &33.1 $\pm$ 1.5 &38.4 $\pm$ 0.9\\
GroupGRO \cite{sagawa2019distributionally} &45.3 $\pm$ 4.6 &36.1 $\pm$ 4.4 &51.0 $\pm$ 0.8 &33.7 $\pm$ 0.9 &41.5 $\pm$ 2.0 \\
Mixup \cite{yan2020improve} &49.4 $\pm$ 2.0 &35.9 $\pm$ 1.8 &53.0 $\pm$ 0.7          &30.0 $\pm$ 0.9 &42.1 $\pm$ 0.7 \\
MLDG \cite{li2018learning} &39.6 $\pm$ 2.3 &33.2 $\pm$ 2.7 &52.4 $\pm$ 0.5          &35.1 $\pm$ 1.5 &40.1 $\pm$ 0.9 \\
CORAL \cite{sun2016deep} &46.7 $\pm$ 3.2 &36.9 $\pm$ 4.3 &49.5 $\pm$ 1.9          &32.5 $\pm$ 0.7 &41.4 $\pm$ 1.8 \\
MMD \cite{li2018domain} &49.1 $\pm$ 1.2 &36.4 $\pm$ 4.8 &50.4 $\pm$ 2.1          &32.3 $\pm$ 1.5 &42.0 $\pm$ 1.0 \\
DANN \cite{ganin2016domain} &44.3 $\pm$ 3.6 &28.0 $\pm$ 1.5 &47.9 $\pm$ 1.0          &31.3 $\pm$ 0.6 &37.9 $\pm$ 0.9 \\
CDANN \cite{li2018deep} &36.9 $\pm$ 6.4 &32.7 $\pm$ 6.2 &51.1 $\pm$ 1.3          &33.5 $\pm$ 0.5 &38.6 $\pm$ 2.3 \\
MTL \cite{blanchard2021domain} &45.2 $\pm$ 2.6 &31.0 $\pm$ 1.6 &50.6 $\pm$ 1.1          &34.9 $\pm$ 0.4 &40.4 $\pm$ 1.0 \\
SagNet \cite{nam2021reducing} &36.3 $\pm$ 4.7 &40.3 $\pm$ 2.0 &52.5 $\pm$ 0.6          &33.3 $\pm$ 1.3 &40.6 $\pm$ 1.5 \\
ARM \cite{zhang2020adaptive} &41.5 $\pm$ 4.5 &27.7 $\pm$ 2.4 &50.9 $\pm$ 1.0          &29.6 $\pm$ 1.5 &37.4 $\pm$ 1.9 \\
VREx \cite {krueger2021out} &48.0 $\pm$ 1.7 &41.1 $\pm$ 1.5 &51.8 $\pm$ 1.5          &32.0 $\pm$ 1.2 &43.2 $\pm$ 0.3 \\
RSC \cite{huang2020self} &42.8 $\pm$ 2.4 &32.2 $\pm$ 3.8 &49.6 $\pm$ 0.9          &32.9 $\pm$ 1.2 &39.4 $\pm$ 1.3 \\
SelfReg \cite{kim2021selfreg} &46.1 $\pm$ 1.5 &34.5 $\pm$ 1.6 &49.8 $\pm$ 0.3          &34.7 $\pm$ 1.5 &41.3 $\pm$ 0.3 \\
MixStyle \cite{zhou2021domain} &50.6 $\pm$ 1.9 &28.0 $\pm$ 4.5 &52.1 $\pm$ 0.7          &33.0 $\pm$ 0.2 &40.9 $\pm$ 1.1 \\
Fish \cite{shi2021gradient} &46.3 $\pm$ 3.0 &29.0 $\pm$ 1.1 &52.7 $\pm$ 1.2          &32.8 $\pm$ 1.0 &40.2 $\pm$ 0.6 \\
SD \cite{pezeshki2021gradient} &45.5 $\pm$ 1.9 &33.2 $\pm$ 3.1 &52.9 $\pm$ 0.7          &36.4 $\pm$ 0.8 &42.0 $\pm$ 1.0 \\
CAD \cite{ruan2021optimal} &43.1 $\pm$ 2.6 &31.1 $\pm$ 1.9 &53.1 $\pm$ 1.6          &34.7 $\pm$ 1.3 &40.5 $\pm$ 0.4 \\
CondCAD \cite{ruan2021optimal} &44.4 $\pm$ 2.9 &32.9 $\pm$ 2.5 &50.5 $\pm$ 1.3          &30.8 $\pm$ 0.5 &39.7 $\pm$ 0.4 \\
Fishr \cite{rame2021ishr} &49.9 $\pm$ 3.3 &36.6 $\pm$ 0.9 &49.8 $\pm$ 0.2 &34.2 $\pm$ 1.3 &42.6 $\pm$ 1.0 \\
Ours &50.8 $\pm$ 4.5 &35.8 $\pm$ 0.9 &51.1 $\pm$ 1.3 &35.2 $\pm$ 2.6 &43.2 $\pm$ 1.3 \\
\bottomrule
\end{tabular}}
\label{tab terainc}
\end{table*}

\begin{table*}
\centering
\caption{Average accuracies on the DomainNet \cite{peng2019moment} datasets using the default hyper-parameter settings in DomainBed \cite{gulrajani2020search}.}
\scalebox{1}{
\begin{tabular}{lC{1.7cm}C{1.5cm}C{1.7cm}C{1.5cm}C{1.7cm}C{1.6cm}c}
\toprule 
&clip  & info &paint & quick &real &sketch & Average\\
\hline \hline
ERM \cite{vapnik1999nature} &50.4 $\pm$ 0.2 &14.0 $\pm$ 0.2 &40.3 $\pm$ 0.5          &11.7 $\pm$ 0.2 &52.0 $\pm$ 0.2 &43.2 $\pm$ 0.3 &35.3 $\pm$ 0.1 \\
IRM \cite{arjovsky2019invariant} &43.2 $\pm$ 0.9 &12.6 $\pm$ 0.3 &35.0 $\pm$ 1.4          &9.9 $\pm$ 0.4 &43.4 $\pm$ 3.0 &38.4 $\pm$ 0.4 &30.4 $\pm$ 1.0\\
GroupGRO \cite{sagawa2019distributionally} &38.2 $\pm$ 0.5 &13.0 $\pm$ 0.3          &28.7 $\pm$ 0.3 &8.2 $\pm$ 0.1 &43.4 $\pm$ 0.5 &33.7 $\pm$ 0.0 &27.5 $\pm$ 0.1 \\
Mixup \cite{yan2020improve} &48.9 $\pm$ 0.3 &13.6 $\pm$ 0.3 &39.5 $\pm$ 0.5          &10.9 $\pm$ 0.4 &49.9 $\pm$ 0.2 &41.2 $\pm$ 0.2 &34.0 $\pm$ 0.0 \\
MLDG \cite{li2018learning} &51.1 $\pm$ 0.3 &14.1 $\pm$ 0.3 &40.7 $\pm$ 0.3          &11.7 $\pm$ 0.1 &52.3 $\pm$ 0.3 &42.7 $\pm$ 0.2 &35.4 $\pm$ 0.0 \\
CORAL \cite{sun2016deep} &51.2 $\pm$ 0.2 &15.4 $\pm$ 0.2 &42.0 $\pm$ 0.2 &12.7 $\pm$ 0.1 &52.0 $\pm$ 0.3 &43.4 $\pm$ 0.0 &36.1 $\pm$ 0.2 \\
MMD \cite{li2018domain} &16.6 $\pm$ 13.3 &0.3 $\pm$ 0.0 &12.8 $\pm$ 10.4 &0.3 $\pm$ 0.0 &17.1 $\pm$ 13.7 &0.4 $\pm$ 0.0 &7.9 $\pm$ 6.2 \\
DANN \cite{ganin2016domain}  &45.0 $\pm$ 0.2 &12.8 $\pm$ 0.2 &36.0 $\pm$ 0.2          &10.4 $\pm$ 0.3 &46.7 $\pm$ 0.3 &38.0 $\pm$ 0.3 &31.5 $\pm$ 0.1\\
CDANN \cite{li2018deep} &45.3 $\pm$ 0.2 &12.6 $\pm$ 0.2 &36.6 $\pm$ 0.2          &10.3 $\pm$ 0.4 &47.5 $\pm$ 0.1 &38.9 $\pm$ 0.4 &31.8 $\pm$ 0.2 \\
MTL \cite{blanchard2021domain} &50.6 $\pm$ 0.2 &14.0 $\pm$ 0.4 &39.6 $\pm$ 0.3          &12.0 $\pm$ 0.3 &52.1 $\pm$ 0.1 &41.5 $\pm$ 0.0 &35.0 $\pm$ 0.0 \\
SagNet \cite{nam2021reducing} &51.0 $\pm$ 0.1 &14.6 $\pm$ 0.1 &40.2 $\pm$ 0.2          &12.1 $\pm$ 0.2 &51.5 $\pm$ 0.3 &42.4 $\pm$ 0.1 &35.3 $\pm$ 0.1 \\
ARM \cite{zhang2020adaptive} &43.0 $\pm$ 0.2 &11.7 $\pm$ 0.2 &34.6 $\pm$ 0.1          &9.8 $\pm$ 0.4 &43.2 $\pm$ 0.3 &37.0 $\pm$ 0.3 &29.9 $\pm$ 0.1 \\
VREx \cite {krueger2021out} &39.2 $\pm$ 1.6 &11.9 $\pm$ 0.4 &31.2 $\pm$ 1.3          &10.2 $\pm$ 0.4 &41.5 $\pm$ 1.8 &34.8 $\pm$ 0.8 &28.1 $\pm$ 1.0 \\
RSC \cite{huang2020self} &39.5 $\pm$ 3.7 &11.4 $\pm$ 0.8 &30.5 $\pm$ 3.1          &10.2 $\pm$ 0.8 &41.0 $\pm$ 1.4 &34.7 $\pm$ 2.6 &27.9 $\pm$ 2.0 \\
SelfReg \cite{kim2021selfreg} &47.9 $\pm$ 0.3 &15.1 $\pm$ 0.3 &41.2 $\pm$ 0.2          &11.7 $\pm$ 0.3 &48.8 $\pm$ 0.0 &43.8 $\pm$ 0.3 &34.7 $\pm$ 0.2 \\
MixStyle \cite{zhou2021domain} &49.1 $\pm$ 0.4 &13.4 $\pm$ 0.0 &39.3 $\pm$ 0.0 &11.4 $\pm$ 0.4 &47.7 $\pm$ 0.3 &42.7 $\pm$ 0.1 &33.9 $\pm$ 0.1\\
Fish \cite{shi2021gradient} &51.5 $\pm$ 0.3 &14.5 $\pm$ 0.2 &40.4 $\pm$ 0.3          &11.7 $\pm$ 0.5 &52.6 $\pm$ 0.2 &42.1 $\pm$ 0.1 &35.5 $\pm$ 0.0 \\
SD \cite{pezeshki2021gradient} &51.3 $\pm$ 0.3 &15.5 $\pm$ 0.1 &41.5 $\pm$ 0.3          &12.6 $\pm$ 0.2 &52.9 $\pm$ 0.2 &44.0 $\pm$ 0.4 &36.3 $\pm$ 0.2 \\
CAD \cite{ruan2021optimal} &45.4 $\pm$ 1.0 &12.1 $\pm$ 0.5 &34.9 $\pm$ 1.1          &10.2 $\pm$ 0.6 &45.1 $\pm$ 1.6 &38.5 $\pm$ 0.6 &31.0 $\pm$ 0.8 \\
CondCAD \cite{ruan2021optimal} &46.1 $\pm$ 1.0 &13.3 $\pm$ 0.4 &36.1 $\pm$ 1.4          &10.7 $\pm$ 0.2 &46.8 $\pm$ 1.3 &38.7 $\pm$ 0.7 &31.9 $\pm$ 0.7 \\
Fishr \cite{rame2021ishr} &47.8 $\pm$ 0.7 &14.6 $\pm$ 0.2 &40.0 $\pm$ 0.3 &11.9 $\pm$ 0.2 &49.2 $\pm$ 0.7 &41.7 $\pm$ 0.1 &34.2 $\pm$ 0.3 \\
Ours &48.8 $\pm$ 0.7 &14.7 $\pm$ 0.2 &40.8 $\pm$ 0.1 &11.4 $\pm$ 0.2 &49.1 $\pm$ 0.1 &44.6 $\pm$ 0.0 &34.9 $\pm$ 0.1 \\
\bottomrule
\end{tabular}}
\label{tab domainnet}
\end{table*}

\clearpage
\newpage
\appendix
\onecolumn

\end{document}